\documentclass[11pt,letterpaper]{article}

\usepackage{bm, mathrsfs, graphics,float,amssymb,amsmath,subeqnarray,graphicx,epstopdf,subfigure, enumerate, color,natbib,amsthm,hyperref,array}
\usepackage[utf8]{inputenc}
\usepackage[margin=1in]{geometry}
\usepackage{setspace}
\usepackage{algorithm}
\usepackage{algorithmic}
\hypersetup{
  pdfauthor={Haishan Ye},
  pdftitle={Logarithmic High-Probability Regret for Strongly Convex Online Convex Optimization with Two-Point Bandit Feedback}
}

\newcommand{\EB}{\mathbb{E}}
\newcommand{\SB}{\mathbb{S}}
\newcommand{\bx}{\bm{x}}
\newcommand{\by}{\bm{y}}
\newcommand{\bu}{\bm{u}}
\newcommand{\bg}{\bm{g}}
\newcommand{\bv}{\bm{v}}

\newcommand{\Cabs}{C_{\rm abs}}

\newcommand{\cK}{\mathcal{K}}
\newcommand{\cF}{\mathcal{F}}
\newcommand{\cB}{\mathcal{B}}

\newcommand{\norm}[1]{\left\|#1\right\|}

\newtheorem{lemma}{Lemma}
\newtheorem{theorem}{Theorem}

\newtheorem{definition}{Definition}
\newtheorem{corollary}[theorem]{Corollary}

\ifx\assumption\undefined
\newtheorem{assumption}{Assumption}
\fi

\begin{document}

\begin{titlepage}
\centering
\vspace*{0.7in}
{\Large\bfseries Logarithmic High-Probability Regret for Strongly Convex Online Convex Optimization with Two-Point Bandit Feedback\par}
\vspace{1.2em}
{\large Haishan Ye\par}
\vspace{0.35em}
{\normalsize School of Management\\
Xi'an Jiaotong University\\
\texttt{hsye\_cs@outlook.com}\par}
\vspace{2em}
\begin{abstract}
	We study online convex optimization (OCO) with two-point bandit feedback against a non-anticipating adaptive adversary.
	In this setting, a learner competes with an adversarial sequence of convex losses while observing each loss only through two function evaluations.
	For strongly convex losses, Agarwal, Dekel, and Xiao~\citeyearpar{agarwal2010optimal} proved a comparator-wise logarithmic regret bound in expectation. Consequently, by minimizing outside the probability space, their result yields a pseudo-regret guarantee of the form $\EB A_T-\min_{x\in\mathcal K}\EB L_T(x)$, where $A_T$ is the algorithm's two-query cumulative loss and $L_T(x)$ is the comparator's cumulative loss.
	They asked whether a logarithmic high-probability guarantee is achievable in the same two-point strongly convex setting.
	Our main theorem provides the corresponding fixed-comparator high-probability statement: for any comparator $x\in\mathcal K$ fixed independently of the algorithmic random directions, the standard two-point projected gradient method guarantees, with probability at least $1-\delta$, a two-query regret bound of order
	\[
	O\left(\frac{dG^2}{\mu}\left(\log T+\log(1/\delta)\right)+dGD\log(1/\delta)+G\log T\left(1+\frac{D}{r}\right)\right).
	\]
	At the comparator-wise level, our leading horizon-dependent term is linear in $d$, compared with the $d^2$-type term in the original analysis of Agarwal, Dekel, and Xiao.
	The key ingredient is a high-confidence analysis that simultaneously absorbs the martingale error into strong convexity and preserves the linear-in-dimension estimator control of the two-point method.
	A deterministic covering argument then yields a realized full-comparator guarantee against $\min_{x\in\mathcal K}L_T(x)$, preserving logarithmic dependence on $T$ at the cost of the standard covering-number factor.
	
\end{abstract}
\vfill
\end{titlepage}
\setcounter{page}{2}

\section{Introduction}

Online convex optimization (OCO) is a repeated decision problem between a learner and an \textit{adversary}.
At each round $t = 1, \dots, T$, the learner chooses an action $x_t$ from a fixed convex set $\mathcal{K} \subset \mathbb{R}^d$. A non-anticipating adaptive adversary then selects a convex loss function $\ell_t: \mathcal{K} \to \mathbb{R}$ using the past history and, in the two-point protocol studied here, before the learner's fresh random direction at round $t$ is sampled.
While full-information OCO is well understood, the \textit{bandit feedback} setting, where only function values of $\ell_t$ are observed, remains substantially more challenging because gradients are unavailable.
Two-point feedback is the minimal zeroth-order interface that permits an unbiased gradient estimate of a smoothed loss, making it a natural test case for whether bandit feedback can match full-information fast rates.
Strong convexity is the canonical assumption under which full-information OCO improves from $\sqrt T$-type regret to logarithmic regret.
High-probability guarantees are also important in online decision problems: an expected logarithmic bound may still allow rare but large deviations, whereas a high-confidence bound controls almost every realization of the learner's randomization.
The main difficulty is that the two-point gradient estimator is noisy, and this noise is coupled with the learner's trajectory under an adaptive adversary.
Consequently, the usual full-information logarithmic-regret proof and a direct Azuma-type concentration argument do not by themselves yield the desired rate.

For compactness, write
\begin{equation*}
	A_T:=
	\sum_{t=1}^T \frac{\ell_t(x_t+\alpha u_t)+\ell_t(x_t-\alpha u_t)}{2},
	\qquad
	L_T(x):=\sum_{t=1}^T\ell_t(x).
\end{equation*}
For a fixed comparator $x\in\mathcal K$, the two-point protocol analyzed in this paper measures performance by $R_T^{\rm 2pt}(x)=A_T-L_T(x)$.
We distinguish three comparator semantics. A comparator-wise expected bound has the form
\[
\forall x\in\mathcal K,\qquad \EB[A_T-L_T(x)]\le B .
\]
Since $x_{\rm exp}^\star\in\arg\min_x \EB L_T(x)$ is deterministic, one may minimize outside the probability space and obtain the pseudo-regret bound
\[
\EB A_T-\min_{x\in\mathcal K}\EB L_T(x)\le B .
\]
This is different from the realized hindsight regret
\[
A_T-\min_{x\in\mathcal K}L_T(x),
\]
whose minimizer may depend on the realized loss sequence and hence on the learner's random trajectory under an adaptive adversary. A fixed-comparator high-probability statement has yet another logical form: for each fixed $x$, it provides a high-probability event that may depend on $x$. It does not automatically imply a single event on which the bound holds for all $x\in\mathcal K$. Such a uniform event is precisely what is needed to compete with the realized hindsight minimizer. Accordingly, our main theorem is stated for a fixed comparator, and the realized full-comparator guarantee is derived separately through a deterministic covering argument.

To bridge the gap between bandit and full-information feedback, Agarwal, Dekel, and Xiao~\citeyearpar{agarwal2010optimal} (ADX) introduced the multi-point bandit setting. They showed that two randomized function evaluations suffice to construct gradient estimators for smoothed losses. Their strongly convex logarithmic theorem is comparator-wise in expectation: for every deterministic $x\in\mathcal K$, it controls $\EB[A_T-L_T(x)]$, with an $O(d^2\log T)$ horizon-dependent term up to problem parameters. Since the minimizer of $x\mapsto \EB L_T(x)$ is deterministic, this immediately yields their pseudo-regret corollary
\[
\EB A_T-\min_{x\in\mathcal K}\EB L_T(x).
\]
However, this is not the same as controlling the realized hindsight regret $A_T-\min_x L_T(x)$, nor does it provide a high-probability event uniform over all comparators.
Their high-probability theorem is pointwise in a fixed comparator and does not achieve logarithmic dependence on $T$. They also explain that competing with the realized hindsight minimizer under adaptive losses requires first obtaining a fixed-comparator high-probability event and then uniformizing it over a cover or a barycentric spanner. The missing ingredient is therefore a logarithmic fixed-comparator high-probability theorem in the two-point strongly convex setting. The difficulty is not unbounded tails, but rather sharp high-probability control of the large-norm, trajectory-dependent estimator noise without reverting to the crude deterministic bound $\|\bg_t\|\le dG$.

We prove this fixed-comparator high-probability theorem for two-point feedback OCO with $\mu$-strongly convex losses. Our main result establishes that for any comparator $x\in\mathcal K$ fixed independently of the algorithmic random directions, with probability at least $1-\delta$,
\begin{equation*}
	R_T^{\rm 2pt}(x) \leq O\left(\frac{dG^2}{\mu}\left(\log T+\log (1/\delta)\right)+dGD\log(1/\delta)+G\log T\left(1+\frac{D}{r}\right)\right).
\end{equation*}
Suppressing the displayed domain-dependent terms gives the shorthand
\begin{equation*}
	R_T^{\rm 2pt}(x) \leq O\left( \frac{dG^2(\log T + \log (1/\delta))}{\mu} \right).
\end{equation*}
At the same comparator-wise level, the expected strongly convex analysis of Agarwal, Dekel, and Xiao has an $O(d^2\log T)$ horizon-dependent term up to problem parameters. Our fixed-comparator high-probability theorem has leading horizon-dependent term $O(d\log T)$. Thus the dimension comparison is made at the comparator-wise level; the realized full-comparator statement in this paper is obtained only after an additional covering step. The covering corollary controls a stronger comparator semantics than the expected pseudo-regret corollary: it compares to the realized hindsight minimizer on the same sample path. The bound is numerically larger because of the cover, but it applies to the realized hindsight minimizer on that sample path.

\paragraph{Our contributions.}
We summarize the main contributions as follows.
\begin{enumerate}
\item We prove a fixed-comparator high-probability regret bound for two-point bandit OCO with strongly convex losses under a non-anticipating adaptive adversary. For any deterministic comparator $x\in\mathcal K$, the leading horizon-dependent term is $O((dG^2/\mu)\log T)$. This gives the logarithmic high-probability analogue of the comparator-wise expected guarantee of Agarwal, Dekel, and Xiao~\citeyearpar{agarwal2010optimal}, while improving the fixed-comparator dimension dependence of the horizon-dependent term from $d^2$ to $d$.
\item We derive a realized full-comparator high-probability regret bound against
\[
\min_{x\in\mathcal K}\sum_{t=1}^T\ell_t(x)
\]
by applying the fixed-comparator theorem uniformly over a deterministic cover. This controls the sample-path hindsight comparator, which may be random under adaptive losses. The price is the standard covering-number factor.
\item At the technical level, the proof isolates martingale noise and estimator energy, the two obstacles that block a direct high-probability analogue of the expected analysis. A Freedman-style exponential supermartingale absorbs the martingale term into strong convexity, while spherical concentration controls the weighted estimator energy at the $d\log T$ scale.
\end{enumerate}

\paragraph{Proof overview.}
The proof starts from the logarithmic-regret OGD decomposition for strongly convex losses. In the two-point bandit setting, two obstacles prevent a direct high-probability analogue of this argument.
The first obstacle is the martingale error
\[
\sum_{t=1}^T\langle \nabla \hat{\ell}_t(\bx_t)-\bg_t,\bx_t-\bx\rangle
\]
which appears because the estimated gradient replaces the true gradient of the smoothed loss.
A direct Hoeffding--Azuma bound for this term loses a factor of order $\sqrt T$.
We instead use a Freedman-style exponential supermartingale whose predictable variance is proportional to $\sum_t\|\bx_t-\bx\|^2$; the resulting contribution is absorbed by the strong-convexity curvature term in the OGD decomposition.
The second obstacle is the weighted estimator-size term $\sum_{t=1}^T\|\bg_t\|^2/t$: a deterministic norm bound would lose the desired linear dependence on the dimension.
For this term, we refine the geometric argument of Shamir~\citeyearpar{shamir2017optimal}: after conditioning on the past, each queried loss is a Lipschitz function on the sphere, so spherical concentration yields weighted sub-exponential control at scale $(G\alpha)^2/d$.
This cancels one factor of $d$ from the estimator norm and yields the desired $d\log T$ behavior.

\paragraph{Related work.}
\noindent Flaxman, Kalai, and McMahan~\citeyearpar{Flaxman2005Online} initiated the study of bandit convex optimization via gradient descent without gradients, using one function evaluation per round to estimate gradients of smoothed losses.
The multi-point bandit model and the conversion from fixed-comparator high-probability bounds to uniform comparator bounds through finite covers are due to Agarwal, Dekel, and Xiao~\citeyearpar{agarwal2010optimal}.
Their two-point method obtains comparator-wise logarithmic regret for strongly convex losses in expectation, hence pseudo-regret after minimizing outside the probability space, but their high-probability guarantee does not achieve logarithmic dependence on $T$.
Recent work of Yu, Yan, and Zhao~\citeyearpar{yu2026improved} shows that, in gradient-variation two-point bandit convex optimization, the strongly convex bound can have linear dependence on $d$, improving the $d^2$-type dimension dependence inherited from the ADX two-point analysis to a $d$-dependent bound in that variation-adaptive setting.
Logarithmic regret for strongly convex full-information OCO follows the classical OGD analysis of Hazan, Agarwal, and Kale~\citeyearpar{hazan2007logarithmic}. Broader bandit and zeroth-order complexity phenomena are discussed by Abernethy, Agarwal, Bartlett, and Rakhlin~\citeyearpar{abernethy2009stochastic}, Bubeck and Cesa-Bianchi~\citeyearpar{Bubeck2012Regret}, and Duchi, Jordan, Wainwright, and Wibisono~\citeyearpar{Duchi2015Optimal}.

The linear-in-dimension estimator control used in expectation by Shamir~\citeyearpar{shamir2017optimal} is a main technical precursor to our work.
The high-confidence part of the present paper combines a curvature-absorbing martingale argument with concentration of measure on the sphere, using tools in the spirit of Ledoux~\citeyearpar{ledoux2001concentration} and Vershynin~\citeyearpar{Vershynin2026}.

\section{Notation and Preliminaries}

\subsection{Notation}
We use the following notation throughout the analysis.
Let $\| \cdot \|$ denote the Euclidean norm in $\mathbb{R}^d$, and let $\mathcal{B} = \{\bx \in \mathbb{R}^d : \|\bx\| \le 1\}$ be the unit ball centered at the origin.
We assume that the convex set $\mathcal{K}$ is compact, has nonempty interior in its ambient space, and contains the origin after translation. More specifically, there exist $r, D > 0$ such that
\begin{equation}\label{eq:DB}
r\mathcal{B} \subseteq\mathcal{K} \subseteq D\mathcal{B}.
\end{equation}

\begin{lemma}[Feasibility of queried points]\label{lem:query_feasible}
Suppose Eq.~\eqref{eq:DB} holds. If $0<\xi\le 1$ and $0<\alpha\le \xi r$, then for every $\bx\in (1-\xi)\cK$, every $\bu\in\SB^{d-1}$, and every $\bv\in\cB$,
\[
\bx+\alpha\bv\in\cK,\qquad \bx+\alpha\bu\in\cK,\qquad \bx-\alpha\bu\in\cK.
\]
If, in addition, $\alpha\le \xi r/2$, then these points lie in $\operatorname{int}(\cK)$.
\end{lemma}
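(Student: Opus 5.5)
The plan is to write any point of $(1-\xi)\cK$ as $\bx=(1-\xi)\by$ with $\by\in\cK$. Each perturbed query point is then a convex combination of $\by\in\cK$ and a point of the inner ball $r\cB\subseteq\cK$. Convexity of $\cK$ finishes the first claim, and a strict-interior version of the same decomposition gives the second.

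For the first claim, fix $\bw\in\cB$; this covers $\bw=\bv$ as well as $\bw=\pm\bu$, since $\SB^{d-1}\subseteq\cB$. Assume first $\xi<1$ and write
\[
\bx+\alpha\bw=(1-\xi)\by+\xi\cdot\frac{\alpha}{\xi}\bw .
\]
Since $\|(\alpha/\xi)\bw\|\le \alpha/\xi\le r$, the point $(\alpha/\xi)\bw$ lies in $r\cB\subseteq\cK$ by Eq.~\eqref{eq:DB}. Convexity of $\cK$ then gives $\bx+\alpha\bw\in\cK$. In the edge case $\xi=1$ we have $\bx=0$, and the claim is just $\alpha\bw\in r\cB\subseteq\cK$. (The same identity with $\xi=1$ also covers this case directly.)

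For the interior claim, assume $\alpha\le\xi r/2$. The same decomposition shows that the point $\bz:=(\alpha/\xi)\bw$ satisfies $\|\bz\|\le r/2$, so the open ball of radius $r/2$ around $\bz$ lies in $r\cB\subseteq\cK$. Hence, for every $\bm{e}$ with $\|\bm{e}\|<\xi r/2$,
\[
\bx+\alpha\bw+\bm{e}=(1-\xi)\by+\xi\Bigl(\bz+\frac{\bm{e}}{\xi}\Bigr)\in\cK,
\]
because $\|\bm{e}/\xi\|<r/2$ implies $\bz+\bm{e}/\xi\in r\cB$. Thus an open ball of radius $\xi r/2>0$ around the query point is contained in $\cK$, so the point is interior.

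The only mildly delicate step is this interior claim. The key is to place the ``slack'' $\bm{e}$ inside the $\xi$-weighted component, where the factor $1/\xi$ still keeps it within the inner ball $r\cB$. No other obstacle is expected; the whole argument uses only convexity and the inclusion $r\cB\subseteq\cK$.
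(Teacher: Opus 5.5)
Your proposal is correct and follows the paper's proof: the same decomposition $\bx+\alpha\bm{w}=(1-\xi)\by+\xi(\alpha/\xi)\bm{w}$ with $(\alpha/\xi)\bm{w}\in r\cB\subseteq\cK$ gives feasibility by convexity. For the interior claim, the paper invokes the general fact that a nontrivial convex combination of a point of $\cK$ and a point of $(r/2)\cB\subset\operatorname{int}(\cK)$ is interior. You instead prove this inline by exhibiting an explicit ball of radius $\xi r/2$ around each query point, which is the same idea made quantitative.
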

\begin{proof}
Write $\bx=(1-\xi)\by$ for some $\by\in\cK$. Since $\alpha\le \xi r$, for any $\bv\in\cB$ we have $(\alpha/\xi)\bv\in r\cB\subseteq\cK$. Hence
\[
\bx+\alpha\bv=(1-\xi)\by+\xi\left(\frac{\alpha}{\xi}\bv\right)\in\cK
\]
by convexity. The two spherical query points follow by taking $\bv=\bu$ and $\bv=-\bu$.
If $\alpha\le \xi r/2$, then $(\alpha/\xi)\bv\in (r/2)\cB\subset \operatorname{int}(\cK)$. A nontrivial convex combination of a point in $\cK$ and a point in $\operatorname{int}(\cK)$ lies in $\operatorname{int}(\cK)$, giving the final claim.
\end{proof}

Throughout the paper, the loss functions $\ell_t(\cdot)$ are
$G$-Lipschitz continuous and $\mu$-strongly convex.
\begin{assumption}
Each loss function $\ell_t$ is $G$-Lipschitz continuous: there exists a constant $G\geq 0$ such that
\begin{equation*}
	|\ell_t(\bx) - \ell_t(\by)| \le G \|\bx - \by\|, \quad \forall \bx, \by \in \mathcal{K}, \forall t.
\end{equation*}
\end{assumption}
\begin{assumption}
For $\mu \ge 0$, the function $\ell$ is called $\mu$-strongly convex on the set $\mathcal{K}$ if
\begin{equation*}
	\ell(\bx) \ge \ell(\by) + \langle s_{\by}, \bx - \by\rangle + \frac{\mu}{2} \|\bx - \by\|^2, \quad \forall \bx, \by \in \mathcal{K},\quad \forall s_{\by}\in\partial \ell(\by).
\end{equation*}
\end{assumption}
\begin{assumption}[Non-anticipating adversary]\label{ass:nonanticipating}
For each round $t$, let $\cF_{t-1}$ denote the sigma-field containing all learner and adversary randomness and all observations available immediately before sampling $\bu_t$; in particular, $\bx_t$ has been computed and the adversary has already selected $\ell_t$. We assume that $\bx_t$ and $\ell_t$ are $\cF_{t-1}$-measurable, and that $\bu_t$ is independent of $\cF_{t-1}$ and uniformly distributed on $\SB^{d-1}$. Let $\cF_t$ be the sigma-field after $\bu_t$ and the two queried values at round $t$ are observed.
\end{assumption}

When applying Orlicz-norm or moment-generating-function bounds conditionally on $\cF_{t-1}$, we mean the corresponding unconditional statement after freezing the history. All constants are uniform over the frozen history.

\subsection{Preliminaries}

For a smoothing radius $\alpha>0$, define the smoothed loss
\begin{equation}\label{eq:ell_h}
\hat{\ell}_t(\bx) = \EB_{\bv} \left[ \ell_t(\bx + \alpha \bv) \right],
\end{equation}
where $\bv$ is a uniform random vector in the unit ball $\cB$.
Throughout the analysis, $\hat{\ell}_t$ is evaluated only at points in $(1-\xi)\cK$. When $\alpha\le \xi r$, Lemma~\ref{lem:query_feasible} ensures that $\bx+\alpha\bv\in\cK$, so the smoothing operation and the two queried losses are well-defined without assuming any extension of $\ell_t$ outside $\cK$. In the main theorem we use the stronger choice $\alpha=\xi r/2$, which gives a positive interior margin for the smoothing ball and the two query points.
The smoothed loss preserves the following properties of the original loss.
\begin{lemma}[Nesterov and Spokoiny~\citeyearpar{Nesterov2017}]
	\label{lem:mu}
	If $\ell(\bx)$ is $G$-Lipschitz continuous, then $\hat{\ell}(\bx)$ defined in Eq.~\eqref{eq:ell_h} is also $G$-Lipschitz continuous.
	If $\ell(\bx)$ is $\mu$-strongly convex, then $\hat{\ell}(\bx)$ defined in Eq.~\eqref{eq:ell_h} is also $\mu$-strongly convex.
\end{lemma}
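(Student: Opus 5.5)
The plan is to prove both claims directly from the definition $\hat{\ell}(\bx)=\EB_{\bv}[\ell(\bx+\alpha\bv)]$, where $\bv$ is uniform on $\cB$. The key observation is that $\hat{\ell}$ is an average of translates $\bx\mapsto \ell(\bx+\alpha\bv)$. Lipschitz continuity and strong convexity are both preserved under translation. Both properties are also stable under averaging with nonnegative weights summing to one, since each is defined by an inequality that is linear in the function. Throughout, we restrict attention to points $\bx,\by\in(1-\xi)\cK$ with $\alpha\le\xi r$. By Lemma~\ref{lem:query_feasible}, every translate $\bx+\alpha\bv$ then lies in $\cK$, so all evaluations of $\ell$ are legitimate. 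Boundedness of $\ell$ on the compact set $\cK$ (a consequence of Lipschitz continuity) guarantees that the expectations are finite.

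For the Lipschitz claim, I would fix $\bx,\by$ and write
\[
|\hat{\ell}(\bx)-\hat{\ell}(\by)|\le \EB_{\bv}\bigl|\ell(\bx+\alpha\bv)-\ell(\by+\alpha\bv)\bigr|\le \EB_{\bv}\, G\|\bx-\by\| = G\|\bx-\by\|.
\]
This uses Jensen's inequality for the absolute value together with the Lipschitz assumption applied pointwise in $\bv$.

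For strong convexity, the cleanest route avoids subgradients of $\hat{\ell}$ and uses the equivalent midpoint-free characterization. A function $f$ is $\mu$-strongly convex on a convex set iff for all $\bx,\by$ and $\lambda\in[0,1]$,
\[
f(\lambda\bx+(1-\lambda)\by)\le \lambda f(\bx)+(1-\lambda)f(\by)-\tfrac{\mu}{2}\lambda(1-\lambda)\|\bx-\by\|^2 .
\]
Applying this to $f=\ell(\cdot+\alpha\bv)$ at the points $\bx,\by$ is valid because $\lambda(\bx+\alpha\bv)+(1-\lambda)(\by+\alpha\bv)=(\lambda\bx+(1-\lambda)\by)+\alpha\bv$ and the difference of the shifted points is still $\bx-\by$. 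Taking expectation in $\bv$ then gives the same inequality for $\hat{\ell}$.

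The one step needing care, and the expected main obstacle, is the equivalence between the subgradient form of strong convexity stated in the paper's assumption and the interpolation form. I would handle it as follows. The subgradient definition on $\cK$ implies the interpolation form: apply it at $\bz=\lambda\bx+(1-\lambda)\by$ toward $\bx$ and toward $\by$, which is valid wherever $\partial\ell(\bz)\neq\emptyset$, in particular at interior points. Then take the convex combination, using $\lambda\|\bx-\bz\|^2+(1-\lambda)\|\by-\bz\|^2=\lambda(1-\lambda)\|\bx-\by\|^2$. For the converse direction at the level of $\hat{\ell}$, I would use that $g:=\hat{\ell}-\tfrac{\mu}{2}\|\cdot\|^2$ is convex by the interpolation inequality. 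Any $s\in\partial\hat{\ell}(\by)$ yields $s-\mu\by\in\partial g(\by)$, because $\tfrac{\mu}{2}\|\cdot\|^2$ is differentiable. The convexity inequality for $g$ then rearranges into the subgradient form, since $\tfrac{\mu}{2}\|\bx\|^2-\tfrac{\mu}{2}\|\by\|^2-\mu\langle\by,\bx-\by\rangle=\tfrac{\mu}{2}\|\bx-\by\|^2$. This completes both properties on $(1-\xi)\cK$, which is the only region where $\hat{\ell}_t$ is used.
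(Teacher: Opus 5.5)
Your proof is correct. The paper gives no argument of its own for this lemma. It only cites Nesterov and Spokoiny, who treat Gaussian smoothing of functions defined on all of $\mathbb{R}^d$, and the underlying idea there is the same averaging of translates that you use. So the comparison is with a citation rather than a proof. The citation is shorter, but read literally it concerns a different kernel and an unconstrained domain. Your version is self-contained and fits what the paper actually needs:
\begin{itemize}
\item uniform-ball smoothing of a function defined only on $\cK$, which you handle by restricting to $(1-\xi)\cK$ and invoking Lemma~\ref{lem:query_feasible};
\item the subgradient form of strong convexity used in the paper's assumption, which you reach through the interpolation inequality and the convexity of $g=\hat{\ell}-\frac{\mu}{2}\|\cdot\|^2$.
\end{itemize}

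Two steps deserve one more line each.

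\textbf{Boundary points.} With only $\alpha\le\xi r$, the interpolated point $\bm{z}+\alpha\bv$ can lie on $\partial\cK$, and your phrase ``in particular at interior points'' does not cover that case. This is harmless. The set $\{\bv\in\cB:\bm{z}+\alpha\bv\in\partial\cK\}$ is Lebesgue-null, because the boundary of a convex body has measure zero. Hence the pointwise inequality holds for a.e.\ $\bv$, which is all the integration requires. Under the theorem's choice $\alpha=\xi r/2$, Lemma~\ref{lem:query_feasible} places these points in $\operatorname{int}(\cK)$ anyway.

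\textbf{The sum-rule step.} The claim $s-\mu\by\in\partial g(\by)$ ``because $\frac{\mu}{2}\|\cdot\|^2$ is differentiable'' is the subdifferential sum rule, and here it does use the convexity of $g$. Directly, $s\in\partial\hat{\ell}(\by)$ only gives $g(\bx)\ge g(\by)+\langle s-\mu\by,\bx-\by\rangle-\frac{\mu}{2}\|\bx-\by\|^2$. To remove the quadratic term:
\begin{itemize}
\item apply this inequality at $\by+\epsilon(\bx-\by)\in(1-\xi)\cK$;
\item bound the left side using convexity of $g$ on the segment;
\item divide by $\epsilon$ and let $\epsilon\to0$.
\end{itemize}
You have already established the convexity of $g$, so the argument goes through as written once this is made explicit.
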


We use the following unbiased estimator of $\nabla \hat{\ell}_t(\bx_t)$:
\begin{equation}\label{eq:g}
	\bg_t = d \cdot \frac{\ell_t(\bx_t + \alpha \bu_t) - \ell_t(\bx_t - \alpha \bu_t)}{2\alpha}\cdot \bu_t, \mbox{ where } \bu_t \sim \mathbb{S}^{d-1},
\end{equation}
that is, $\bu_t$ is uniformly drawn from the unit sphere.
\begin{lemma}[Lemma 10 of Shamir~\citeyearpar{shamir2017optimal}]
	\label{lem:g_prop}
	Under Assumption~\ref{ass:nonanticipating}, the approximate gradient $\bg_t$ defined in Eq.~\eqref{eq:g} satisfies
	\[
	\EB[\bg_t \mid \cF_{t-1}] = \nabla \hat{\ell}_t(\bx_t),
	\qquad
	\EB[\|\bg_t\|^2 \mid \cF_{t-1}] \leq c_0 d G^2,
	\]
	where $c_0$ is an absolute constant.
\end{lemma}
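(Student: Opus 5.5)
The plan is to freeze the history and reduce both claims to statements about a single deterministic function on the sphere. By Assumption~\ref{ass:nonanticipating}, conditionally on $\cF_{t-1}$ the point $\bx_t$ and the loss $\ell_t$ are fixed, while $\bu_t$ is still uniform on $\SB^{d-1}$. So it suffices to prove the following: for fixed $\bx\in(1-\xi)\cK$ and a fixed $G$-Lipschitz $\ell$, the vector $\bg=\tfrac{d}{2\alpha}(\ell(\bx+\alpha\bu)-\ell(\bx-\alpha\bu))\bu$ satisfies $\EB\bg=\nabla\hat\ell(\bx)$ and $\EB\|\bg\|^2\le c_0dG^2$. By Lemma~\ref{lem:query_feasible} (with $\alpha\le\xi r/2$), the smoothing ball $\bx+\alpha\cB$ and both query points lie in $\operatorname{int}(\cK)$, so every function value used below is defined.

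\textbf{Unbiasedness.} The map $\bu\mapsto-\bu$ preserves the uniform measure on the sphere. Hence
\[
\EB\bigl[\ell(\bx-\alpha\bu)\bu\bigr]=-\EB\bigl[\ell(\bx+\alpha\bu)\bu\bigr],
\qquad\text{so}\qquad
\EB\bg=\frac{d}{\alpha}\,\EB\bigl[\ell(\bx+\alpha\bu)\bu\bigr].
\]
I then invoke the standard divergence-theorem identity of Flaxman, Kalai, and McMahan~\citeyearpar{Flaxman2005Online}:
\[
\nabla_{\bx}\int_{\alpha\cB}\ell(\bx+\bv)\,d\bv=\int_{\alpha\SB^{d-1}}\ell(\bx+\bw)\frac{\bw}{\|\bw\|}\,d\bw .
\]
Dividing by $\mathrm{vol}(\alpha\cB)$ and using the ratio $\mathrm{area}(\alpha\SB^{d-1})/\mathrm{vol}(\alpha\cB)=d/\alpha$ gives $\nabla\hat\ell(\bx)=\tfrac d\alpha\EB[\ell(\bx+\alpha\bu)\bu]$. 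Differentiability of $\hat\ell$ at interior points is justified because $\hat\ell$ is the convolution of a Lipschitz function with the normalized indicator of a ball. Differentiating under the integral can be made rigorous by first mollifying $\ell$ and then passing to the limit, which is legitimate since Lipschitz continuity gives uniform control.

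\textbf{Second moment.} Set $h(\bu)=\ell(\bx+\alpha\bu)$. This is $G\alpha$-Lipschitz on $\SB^{d-1}$ with respect to the Euclidean metric. Write $m=\EB h(\bu)$. Then
\[
\|\bg\|^2=\frac{d^2}{4\alpha^2}\bigl(h(\bu)-h(-\bu)\bigr)^2\le\frac{d^2}{2\alpha^2}\Bigl[(h(\bu)-m)^2+(h(-\bu)-m)^2\Bigr],
\]
and by the symmetry of $\bu$ this gives $\EB\|\bg\|^2\le\tfrac{d^2}{\alpha^2}\mathrm{Var}(h(\bu))$. The key input is the Poincaré inequality (or Lévy concentration) on the sphere: a $L$-Lipschitz function on $\SB^{d-1}$ has variance at most $c L^2/d$ (see Ledoux~\citeyearpar{ledoux2001concentration}). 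Applying it with $L=G\alpha$ gives
\[
\EB\|\bg\|^2\le\frac{d^2}{\alpha^2}\cdot\frac{c\,G^2\alpha^2}{d}=c\,dG^2 .
\]
The case $d=1$ is handled directly: $|\bg|\le\tfrac{1}{2\alpha}G\cdot2\alpha=G$.

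\textbf{Main obstacle.} The crux is the second-moment step. The naive bound $\|\bg\|\le dG$ gives only $d^2G^2$, so the extra factor $1/d$ must come from concentration of measure on the sphere. The only care needed is that the Lipschitz constant of $h$ is taken with respect to the Euclidean (chordal) metric, which is at most the geodesic one, so the spherical Poincaré constant $1/(d-1)\le 2/d$ applies. All constants are uniform over the frozen history, which yields the conditional statements in the lemma.
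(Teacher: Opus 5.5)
Your proof is correct and follows essentially the same argument as the result the paper cites without reproving (Shamir's Lemma 10). Unbiasedness comes from the $\bu\mapsto-\bu$ symmetry plus the Flaxman--Kalai--McMahan sphere/ball identity. The second moment comes from centering at $\EB h(\bu)$ and an $O(L^2/d)$ variance bound for $L$-Lipschitz functions on the sphere, which is the same decomposition the paper itself uses in its appendix bound on $\|\bg_t\|^2$. Using the spherical Poincar\'e inequality instead of integrating Ledoux's sub-Gaussian tail, and working with the margin $\alpha\le\xi r/2$ so that $\nabla\hat\ell_t(\bx_t)$ is a genuine two-sided gradient, are harmless refinements; they even give the explicit constant $c_0=2$.
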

The lemma shows that, conditionally on the complete pre-query history, $\bg_t$ is an unbiased estimate of
$\nabla \hat{\ell}_t(\cdot)$ and has second moment linear in the dimension $d$.
Algorithm~\ref{alg:sgd} uses this estimator in a projected online gradient step.
This is the standard two-point projected gradient method of Agarwal, Dekel, and Xiao~\citeyearpar{agarwal2010optimal}, with parameter choices later specialized to the high-probability analysis.

\begin{lemma}[Lemma 2 of Agarwal, Dekel, and Xiao~\citeyearpar{agarwal2010optimal}]
\label{lem:diff}
Assume $0<\xi\le 1$, $\alpha\le \xi r$, and $\bx_t\in(1-\xi)\cK$ for all $t$.
For any $\bx \in \cK$, it holds that
\begin{equation}\label{eq:diff}
\begin{aligned}
&\sum_{t=1}^{T}\frac{ \ell_t(\bx_t + \alpha \bu_t) + \ell_t(\bx_t - \alpha \bu_t)}{2} - \sum_{t=1}^T \ell_t(\bx)
\\
\leq&
\sum_{t=1}^T \hat{\ell}_t(\bx_t) - \sum_{t=1}^T \hat{\ell}_t((1-\xi)\bx) + 3TG\alpha + TGD\xi.
\end{aligned}
\end{equation}
\end{lemma}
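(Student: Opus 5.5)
We would prove Lemma~\ref{lem:diff} round by round, splitting $\frac{\ell_t(\bx_t+\alpha\bu_t)+\ell_t(\bx_t-\alpha\bu_t)}{2}-\ell_t(\bx)$ into three differences and bounding each with Lipschitz continuity and the geometry in Eq.~\eqref{eq:DB}. Lemma~\ref{lem:query_feasible} (with $\bx_t\in(1-\xi)\cK$ and $\alpha\le\xi r$) guarantees that $\bx_t\pm\alpha\bu_t\in\cK$, and that every point $\bx_t+\alpha\bv$ and $(1-\xi)\bx+\alpha\bv$ with $\bv\in\cB$ lies in $\cK$. Hence all evaluations of $\ell_t$ and of $\hat\ell_t$ below are well defined. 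Note that $(1-\xi)\bx\in(1-\xi)\cK$ holds since $\bx\in\cK$.

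\emph{Step 1 (query points vs.\ smoothed loss at $\bx_t$).} Since $\|\pm\alpha\bu_t\|=\alpha$, $G$-Lipschitzness gives $\ell_t(\bx_t\pm\alpha\bu_t)\le \ell_t(\bx_t)+G\alpha$. Also, $\hat\ell_t(\bx_t)=\EB_{\bv}\ell_t(\bx_t+\alpha\bv)\ge \ell_t(\bx_t)-G\alpha$ because $\|\alpha\bv\|\le\alpha$. Combining these gives
\[
\frac{\ell_t(\bx_t+\alpha\bu_t)+\ell_t(\bx_t-\alpha\bu_t)}{2}\le \hat\ell_t(\bx_t)+2G\alpha .
\]

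\emph{Step 2 (smoothed vs.\ original loss at the shrunk comparator).} By the same averaging argument, $\hat\ell_t((1-\xi)\bx)\le \ell_t((1-\xi)\bx)+G\alpha$. Equivalently, $-\ell_t((1-\xi)\bx)\le -\hat\ell_t((1-\xi)\bx)+G\alpha$.

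\emph{Step 3 (shrinking the comparator).} Since $\|\bx-(1-\xi)\bx\|=\xi\|\bx\|\le \xi D$ by $\cK\subseteq D\cB$, Lipschitzness on $\cK$ gives $\ell_t((1-\xi)\bx)\le \ell_t(\bx)+G D\xi$, i.e.\ $-\ell_t(\bx)\le -\ell_t((1-\xi)\bx)+GD\xi$.

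Adding Steps 1--3 yields a per-round bound of $\hat\ell_t(\bx_t)-\hat\ell_t((1-\xi)\bx)+3G\alpha+GD\xi$. Summing over $t=1,\dots,T$ gives Eq.~\eqref{eq:diff}. No step is a genuine obstacle; the only point needing care is feasibility. Specifically, one must check that each point where $\ell_t$ is evaluated lies in $\cK$, so that the Lipschitz assumption, which is stated only on $\cK$, applies. This is exactly what Lemma~\ref{lem:query_feasible} supplies, together with convexity of $\cK$ and $0\in\cK$ for the point $(1-\xi)\bx$.
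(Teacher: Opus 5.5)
Your proof is correct. The per-round chain
\[
\tfrac12\bigl(\ell_t(\bx_t+\alpha\bu_t)+\ell_t(\bx_t-\alpha\bu_t)\bigr)\le \hat\ell_t(\bx_t)+2G\alpha,\quad \hat\ell_t((1-\xi)\bx)\le \ell_t((1-\xi)\bx)+G\alpha,\quad \ell_t((1-\xi)\bx)\le\ell_t(\bx)+GD\xi
\]
sums exactly to Eq.~\eqref{eq:diff}, and you correctly use Lemma~\ref{lem:query_feasible} together with $0\in\cK$ to keep every evaluation point in $\cK$, where the Lipschitz assumption holds.

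The paper does not reprove this lemma; it imports it from Agarwal, Dekel, and Xiao. Their argument is this same three-term Lipschitz comparison, so your proposal matches the intended proof.
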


\begin{algorithm}[tb]
\caption{Online gradient descent with two queries per round}
	\label{alg:sgd}
	\begin{algorithmic}[1]
\STATE \textbf{Input:} Learning rates $\eta_t$, exploration parameter $\alpha$, and shrinkage coefficient $\xi$.
		\STATE Set $\bx_1 = 0$
		\FOR{$t = 1, \dots, T$}
		\STATE The adversary selects $\ell_t$, measurable with respect to $\cF_{t-1}$, after seeing $\bx_t$ and before $\bu_t$ is sampled.
		\STATE Pick a unit vector $\bu_t$ uniformly at random from the unit sphere.
		\STATE Observe $\ell_t(\bx_t + \alpha \bu_t)$ and $\ell_t(\bx_t - \alpha \bu_t)$.
\STATE Construct the approximate gradient $\bg_t$ as in Eq.~\eqref{eq:g}.
		\STATE Update $\bx_{t+1} = \Pi_{(1-\xi)\mathcal{K}}(\bx_t - \eta_t \bg_t)$.
		\ENDFOR
	\end{algorithmic}
\end{algorithm}

\section{Main Results}

We now state the main guarantees. Theorem~\ref{thm:main} gives the
high-probability bound for a fixed comparator. Corollary~\ref{cor:uniform_cover}
then yields a realized full-comparator regret guarantee
through a deterministic covering argument.
\begin{theorem}
	\label{thm:main}
	Suppose the loss function $\ell_t(\cdot)$ is $\mu$-strongly convex with $\mu>0$ and $G$-Lipschitz continuous.
	The constraint set $\cK$ is convex and satisfies Eq.~\eqref{eq:DB}.
	Let the sequence $\{\bx_t\}$ be generated by Algorithm~\ref{alg:sgd}. 
	Assume that the adversary satisfies Assumption~\ref{ass:nonanticipating}. Given $0<\delta<1$ and $T\ge 2$ such that $2\log(eT)/T\le r$, set the step size $\eta_t = \frac{2}{\mu t}$, $\alpha = \frac{\log(eT)}{T}$ and $\xi = 2\alpha / r$.
	Fix any comparator $\bx\in\cK$ independently of the algorithmic random directions. Then, with probability at least $1-\delta$,
	\begin{equation}\label{eq:main}
		\begin{aligned}
			&\sum_{t=1}^{T}\frac{ \ell_t(\bx_t + \alpha \bu_t) + \ell_t(\bx_t - \alpha \bu_t)}{2} - \sum_{t=1}^T \ell_t(\bx)
			\\
			\leq&
			\Cabs \cdot \frac{d G^2}{\mu}\cdot \left( \log(eT) + \log \frac{e}{\delta} \right) 
			+
	4(d+1)GD\log\frac{e}{\delta}
			+
			G\log(eT)\left(3 + \frac{2D}{r}\right),
		\end{aligned}
	\end{equation}
	where $\Cabs$ is an absolute constant.
\end{theorem}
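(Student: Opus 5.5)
We begin from Lemma~\ref{lem:diff} with comparator $\by:=(1-\xi)\bx\in(1-\xi)\cK$. With $\alpha=\log(eT)/T$ and $\xi=2\alpha/r$, the bias terms are
\[
3TG\alpha+TGD\xi=G\log(eT)\bigl(3+2D/r\bigr).
\]
This accounts for the last term of \eqref{eq:main}. Since $\alpha=\xi r/2$, Lemma~\ref{lem:query_feasible} makes all queries and smoothed evaluations well defined. It then remains to bound $\sum_t(\hat\ell_t(\bx_t)-\hat\ell_t(\by))$ with high probability.

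By Lemma~\ref{lem:mu}, $\hat\ell_t$ is $\mu$-strongly convex. Hence
\[
\hat\ell_t(\bx_t)-\hat\ell_t(\by)\le\langle\nabla\hat\ell_t(\bx_t),\bx_t-\by\rangle-\tfrac{\mu}{2}\|\bx_t-\by\|^2 .
\]
Write $\nabla\hat\ell_t(\bx_t)=\bg_t+\bm{\zeta}_t$, where $\bm{\zeta}_t:=\nabla\hat\ell_t(\bx_t)-\bg_t$ is a martingale difference by Lemma~\ref{lem:g_prop}. The projected step is nonexpansive toward $\by$, which gives
\[
\langle\bg_t,\bx_t-\by\rangle\le\frac{\|\bx_t-\by\|^2-\|\bx_{t+1}-\by\|^2}{2\eta_t}+\frac{\eta_t}{2}\|\bg_t\|^2 .
\]
With $\eta_t=2/(\mu t)$, summation by parts gives coefficients $1/(2\eta_t)-1/(2\eta_{t-1})-\mu/2=\mu/4-\mu/2=-\mu/4$. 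We therefore obtain
\[
\sum_t(\hat\ell_t(\bx_t)-\hat\ell_t(\by))\le \sum_t\frac{\|\bg_t\|^2}{\mu t}+\sum_t\langle\bm\zeta_t,\bx_t-\by\rangle-\frac{\mu}{4}\sum_t\|\bx_t-\by\|^2 .
\]
The $\mu/4$ leftover curvature is what will absorb the martingale term. The comparator $\by$ is fixed independently of the $\bu_t$, so $\langle\bm\zeta_t,\bx_t-\by\rangle$ is genuinely a martingale difference sequence.

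\textbf{Martingale term.} Let $M_t=\langle\bm\zeta_t,\bx_t-\by\rangle$. Conditionally on $\cF_{t-1}$, the map $\bu\mapsto\ell_t(\bx_t+\alpha\bu)-\ell_t(\bx_t-\alpha\bu)$ is $2G\alpha$-Lipschitz on the sphere and has mean zero by symmetry. Spherical concentration then makes $\frac{d}{2\alpha}(\ell_t(\bx_t+\alpha\bu_t)-\ell_t(\bx_t-\alpha\bu_t))$ sub-Gaussian with parameter $O(G\sqrt d)$. For a fixed vector $\bm w=\bx_t-\by$, the product $\langle\bu_t,\bm w\rangle$ is sub-Gaussian with parameter $\|\bm w\|/\sqrt d$. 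Hence $\langle\bg_t,\bm w\rangle$ is sub-exponential with $\psi_1$-norm $O(G\|\bm w\|)$, and the same holds for $M_t$. Two facts matter here: the scale does not depend on $d$, and $|M_t|\le 2dG\cdot 2D$ deterministically. A Freedman/Bernstein-type exponential supermartingale gives, for $\lambda\le c/(dGD)$,
\[
\EB[e^{\lambda M_t-c\lambda^2G^2\|\bx_t-\by\|^2}\mid\cF_{t-1}]\le1 .
\]
We choose $\lambda=\min\{\mu/(4cG^2),\,c'/(dGD)\}$. With probability $1-\delta/2$ this yields
\[
\sum M_t\le\frac{\mu}{4}\sum\|\bx_t-\by\|^2+\frac{\log(2/\delta)}{\lambda}\lesssim\frac{\mu}{4}\sum\|\bx_t-\by\|^2+\Bigl(\frac{G^2}{\mu}+dGD\Bigr)\log\frac{e}{\delta},
\]
and the quadratic part cancels against the curvature term. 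This produces the $dGD\log(e/\delta)$ term; the constant $4(d+1)$ comes from the range bound.

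\textbf{Energy term (main obstacle).} We must show $\sum_t\|\bg_t\|^2/(\mu t)\lesssim\frac{dG^2}{\mu}(\log(eT)+\log(e/\delta))$ with probability $1-\delta/2$. Write $\|\bg_t\|^2=\frac{d^2}{4\alpha^2}Z_t^2$, where $Z_t=\ell_t(\bx_t+\alpha\bu_t)-\ell_t(\bx_t-\alpha\bu_t)$ is, conditionally on $\cF_{t-1}$, a mean-zero $2G\alpha$-Lipschitz function on $\SB^{d-1}$. Lévy concentration gives $\|Z_t\|_{\psi_2}\lesssim G\alpha/\sqrt d$. Hence $\|\bg_t\|^2$ is conditionally sub-exponential with $\psi_1$-scale $O(dG^2)$, uniformly over histories.

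For the weighted sum with weights $1/t$, we apply a conditional MGF bound: for $\lambda\le c/(dG^2)$,
\[
\EB[\exp(\lambda\|\bg_t\|^2/t)\mid\cF_{t-1}]\le\exp(C\lambda dG^2/t).
\]
This is valid because $1/t\le1$, so the linear regime of the sub-exponential MGF applies. Chaining these bounds gives a supermartingale. Markov's inequality with $\lambda=c/(dG^2)$ then yields
\[
\sum_t\|\bg_t\|^2/t\le C dG^2\log(eT)+\frac{dG^2}{c}\log(2/\delta).
\]
The delicate point I expect is verifying the sub-exponential MGF with constants uniform over frozen histories and non-centred $\|\bg_t\|^2$. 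I would handle it by bounding $\EB[e^{sZ^2}]\le(1-2s\sigma^2)^{-1/2}$-type estimates from the $\psi_2$ bound.

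Finally, a union bound over the two events of probability $\delta/2$ and absorbing constants into $\Cabs$ gives \eqref{eq:main}.
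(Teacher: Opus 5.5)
Your argument is correct and follows the paper's skeleton: the reduction via Lemma~\ref{lem:diff} to $\by=(1-\xi)\bx$, the OGD decomposition with $\eta_t=2/(\mu t)$ leaving $-\frac{\mu}{4}\sum_t\|\bx_t-\by\|^2$, a Freedman-type supermartingale whose quadratic part is absorbed by that curvature, and spherical concentration for $\sum_t\|\bg_t\|^2/(\mu t)$. Both concentration steps, however, are carried out differently.

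\textbf{Martingale term.} The paper bounds the conditional variance by $\langle\bg_t,\bx_t-\bx\rangle^2\le d^2G^2\langle\bu_t,\bx_t-\bx\rangle^2$ and $\EB[\langle\bu_t,\bx_t-\bx\rangle^2\mid\cF_{t-1}]=\|\bx_t-\bx\|^2/d$, which gives $dG^2\|\bx_t-\bx\|^2$. You instead write $\langle\bg_t,\bm w\rangle$ as a product of two conditionally sub-Gaussian factors with $\psi_2$-norms $O(G\sqrt d)$ and $O(\|\bm w\|/\sqrt d)$. This gives a dimension-free variance proxy $O(G^2\|\bm w\|^2)$, hence a $G^2/\mu$ rather than $dG^2/\mu$ confidence term. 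That is sharper, but the gain is invisible in the theorem, because the energy term already contributes $\frac{dG^2}{\mu}\log\frac{e}{\delta}$. In exchange, the paper's version needs only second moments.

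\textbf{Energy term.} The paper splits $\|\bg_t\|^2$ via $m_t^+=m_t^-$ into two one-sided fluctuations, centers their squares, applies the weighted sub-exponential martingale lemma to each, and takes a union bound. You instead use that the antisymmetric difference $\ell_t(\bx_t+\alpha\bu_t)-\ell_t(\bx_t-\alpha\bu_t)$ is already conditionally mean zero and that $\|\bg_t\|^2\ge 0$. Then the linear-regime estimate $\EB[e^{s\|\bg_t\|^2}\mid\cF_{t-1}]\le 2^{sK}$ for $0\le s\le 1/K$, $K\asymp dG^2$, follows from Jensen applied to $\EB[e^{\|\bg_t\|^2/K}\mid\cF_{t-1}]\le 2$. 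It chains directly into a supermartingale and needs a single Markov step. This is shorter and avoids both the centering and the extra union bound.

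\textbf{Bookkeeping.} The coefficient $4(d+1)$ in \eqref{eq:main} is explicit, so you cannot leave it to unspecified constants. Use the exact range $|M_t|\le (G+dG)\cdot 2D=2(d+1)GD=:b$ for your $M_t$, not $4dGD$. Plug it into an explicit-constant inequality such as Lemma~\ref{lem:freed_self_norm}, with $\lambda=(b+O(G^2/\mu))^{-1}$. Then $b\log(2/\delta)\le 4(d+1)GD\log(e/\delta)$, and the remaining $G^2/\mu$ part is absorbed into $\Cabs$. With $b=4dGD$, or with an unspecified $c'$ in $\lambda\le c'/(dGD)$, you would only obtain $O(d)\,GD\log(e/\delta)$.
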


The probability statement in Theorem~\ref{thm:main} is pointwise in the comparator:
for each comparator fixed independently of the random directions, the high-probability event may depend on that comparator. Corollary~\ref{cor:uniform_cover} provides the corresponding uniform statement over a finite cover.

With the parameter choice in Theorem~\ref{thm:main}, Algorithm~\ref{alg:sgd} satisfies the following fixed-comparator two-query regret bound: for each comparator $\bx$ fixed independently of the random directions, with probability at least $1-\delta$,
\begin{equation}\label{eq:reg_all}
	R_T^{\rm 2pt}(\bx) \leq O\left(\frac{dG^2}{\mu}\left(\log T+\log(1/\delta)\right)+dGD\log(1/\delta)+G\log T\left(1+\frac{D}{r}\right)\right).
\end{equation}
Equation~\eqref{eq:reg_all} is the high-probability analogue, at the comparator-wise level, of the expected guarantee of Agarwal, Dekel, and Xiao~\citeyearpar{agarwal2010optimal}. Their expected bound has an $O(d^2\log T)$ horizon-dependent term, whereas the leading term in Eq.~\eqref{eq:reg_all} is linear in $d$. In this precise sense, Theorem~\ref{thm:main} resolves the fixed-comparator high-probability logarithmic-regret form of the question left open by Agarwal, Dekel, and Xiao~\citeyearpar{agarwal2010optimal}.
The leading fixed-comparator horizon-dependent term is logarithmic in $T$ and linear in $d$, up to constants and the displayed domain-dependent confidence terms.

\begin{center}
\small
\setlength{\tabcolsep}{3pt}
\begin{tabular}{>{\raggedright\arraybackslash}p{0.25\linewidth}>{\raggedright\arraybackslash}p{0.25\linewidth}>{\raggedright\arraybackslash}p{0.18\linewidth}>{\raggedright\arraybackslash}p{0.22\linewidth}}
\hline
Result & Comparator semantics & Guarantee type & Horizon-dependent term \\
\hline
ADX expected theorem~\citeyearpar{agarwal2010optimal} & fixed deterministic comparator $x$ & expectation & $O(d^2\log T)$ \\
ADX expected corollary & pseudo-regret $\EB A_T-\min_x\EB L_T(x)$ & expectation & $O(d^2\log T)$ \\
ADX high-probability theorem & fixed deterministic comparator $x$ & high probability & not logarithmic in $T$ \\
Theorem~\ref{thm:main} & fixed deterministic comparator $x$ & high probability & $O(d\log T)$ leading term \\
Corollary~\ref{cor:uniform_cover} & realized hindsight comparator $\min_x L_T(x)$ via cover & high probability & logarithmic in $T$; $d\log T$ cover overhead \\
\hline
\end{tabular}
\end{center}

The first two ADX rows are expectation statements. The second row is a pseudo-regret bound obtained by minimizing outside the probability space. The last row is a realized sample-path comparator guarantee and therefore requires a uniform high-probability event over the comparator set.
The table separates the fixed-comparator improvement from the uniformization step. The linear-in-$d$ leading term is a fixed-comparator statement; the realized full-comparator corollary preserves logarithmic dependence on $T$ but pays the standard covering-number factor.

\begin{corollary}[Realized full-comparator regret via covering]\label{cor:uniform_cover}
Suppose the assumptions of Theorem~\ref{thm:main} hold. Let $0<\varepsilon\le D$, and let
$\mathcal N_\varepsilon\subset\cK$ be an $\varepsilon$-net of $\cK$ in Euclidean norm with
$|\mathcal N_\varepsilon|\le (3D/\varepsilon)^d$. Then with probability at least $1-\delta$,
\begin{align}
&\sum_{t=1}^{T}\frac{ \ell_t(\bx_t + \alpha \bu_t) + \ell_t(\bx_t - \alpha \bu_t)}{2}
-\min_{\bx\in\cK}\sum_{t=1}^T\ell_t(\bx)
\nonumber\\
\leq\;&
\Cabs \cdot \frac{d G^2}{\mu}
\left(\log(eT)+\log\frac{e}{\delta}+d\log\frac{3D}{\varepsilon}\right)
\nonumber\\
&+
4(d+1)GD
\left(\log\frac{e}{\delta}+d\log\frac{3D}{\varepsilon}\right)
+
G\log(eT)\left(3+\frac{2D}{r}\right)
+
GT\varepsilon .
\label{eq:uniform_cover}
\end{align}
In particular, taking $\varepsilon=D/T$ and using $T\ge 2$ gives
\begin{align}
&\sum_{t=1}^{T}\frac{ \ell_t(\bx_t + \alpha \bu_t) + \ell_t(\bx_t - \alpha \bu_t)}{2}
-\min_{\bx\in\cK}\sum_{t=1}^T\ell_t(\bx)
\nonumber\\
\leq\;&
\Cabs \cdot \frac{d G^2}{\mu}
\left(\log(eT)+\log\frac{e}{\delta}+d\log(eT)\right)
\nonumber\\
&+
4(d+1)GD
\left(\log\frac{e}{\delta}+d\log(3T)\right)
+
G\log(eT)\left(3+\frac{2D}{r}\right)
+ GD .
\label{eq:uniform_cover_simplified}
\end{align}
Thus the realized full-comparator regret remains logarithmic in the horizon. With this
choice of $\varepsilon$, the cover contributes the displayed terms of order
$O((d^2G^2/\mu)\log T+d^2GD\log T)$, reflecting the additional factor
$\log|\mathcal N_\varepsilon|=O(d\log T)$.
\end{corollary}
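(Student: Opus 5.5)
The corollary follows from Theorem~\ref{thm:main} by a union bound over the net, plus a deterministic Lipschitz discretization step.

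\emph{Union bound.} The net $\mathcal N_\varepsilon$ is deterministic; it is fixed before any random direction is drawn. So each $\by\in\mathcal N_\varepsilon$ is a comparator fixed independently of the algorithmic random directions, and Theorem~\ref{thm:main} applies to it.
\begin{itemize}
\item Invoke Theorem~\ref{thm:main} for each $\by\in\mathcal N_\varepsilon$ with confidence $\delta'=\delta/|\mathcal N_\varepsilon|$.
\item A union bound gives a single event of probability at least $1-\delta$ on which Eq.~\eqref{eq:main} holds simultaneously for all $\by\in\mathcal N_\varepsilon$, with $\delta$ replaced by $\delta'$.
\item On this event, $\log(e/\delta')=\log(e/\delta)+\log|\mathcal N_\varepsilon|\le \log(e/\delta)+d\log(3D/\varepsilon)$.
\end{itemize}
Substituting into the two $\log(e/\delta)$ terms of Eq.~\eqref{eq:main} yields exactly the first three groups of terms in Eq.~\eqref{eq:uniform_cover}. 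The constant $\Cabs$ is unchanged because the theorem holds for every $\delta\in(0,1)$, and $\delta'<1$.

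\emph{Pathwise discretization.} On the same event, fix the realized sequence and let $\bx^\star\in\arg\min_{\bx\in\cK}L_T(\bx)$; the minimum exists by compactness and continuity. The minimizer $\bx^\star$ may be random. The argument here is purely deterministic given the sample path, so no independence is needed.
\begin{itemize}
\item Choose $\by\in\mathcal N_\varepsilon$ with $\|\by-\bx^\star\|\le\varepsilon$.
\item By $G$-Lipschitzness on $\cK$, $L_T(\by)\le L_T(\bx^\star)+GT\varepsilon$.
\item Hence $A_T-\min_{\bx}L_T(\bx)\le A_T-L_T(\by)+GT\varepsilon$, and the bound for $\by$ on the uniform event gives Eq.~\eqref{eq:uniform_cover}.
\end{itemize}
The only subtle point is that Theorem~\ref{thm:main} must never be applied to $\bx^\star$ itself, since $\bx^\star$ depends on the directions under an adaptive adversary. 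The uniform event over a deterministic finite set handles this.

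\emph{Specialization to $\varepsilon=D/T$.} This choice is admissible since $D/T\le D$.
\begin{itemize}
\item $\log(3D/\varepsilon)=\log(3T)$, and $GT\varepsilon=GD$.
\item In the $\Cabs$ term, $d\log(3T)$ is bounded by $d\log(eT)$ up to a constant factor, since $\log(3T)\le \log(eT)+\log(3/e)$ and $\log(3/e)<0.1\le 0.1\log(eT)$. This constant is absorbed into $\Cabs$, which remains an absolute constant. Alternatively, one can simply keep $\log(3T)$ there.
\item The $4(d+1)GD$ term is kept with $\log(3T)$ exactly as displayed.
\end{itemize}
This gives Eq.~\eqref{eq:uniform_cover_simplified}. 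The order claim follows because $\log|\mathcal N_\varepsilon|=O(d\log T)$. There is no real obstacle; the main care needed is in justifying that the comparators are fixed independently of the directions, and in the constant bookkeeping for $\log(3T)$ versus $\log(eT)$.
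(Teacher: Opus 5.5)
Your proposal is correct and follows the paper's proof essentially step for step. It uses a union bound of Theorem~\ref{thm:main} over the deterministic net with confidence $\delta/|\mathcal N_\varepsilon|$, then a pathwise Lipschitz approximation of the possibly random realized minimizer by a net point, and finally the substitution $\varepsilon=D/T$, where $\log(3T)$ is absorbed into $\Cabs\log(eT)$ only in the $\Cabs$ term (your check $\log(3T)\le 1.1\log(eT)$ makes the paper's ``absorbs $\log 3$'' remark precise).
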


\begin{proof}
The existence of an $\varepsilon$-net with the stated cardinality follows from
the inclusion $\cK\subseteq D\cB$ and the standard Euclidean covering number
bound for a $d$-dimensional ball. Fix such a net
$\mathcal N_\varepsilon$. Applying
Theorem~\ref{thm:main} to every $\by\in\mathcal N_\varepsilon$ with failure
probability $\delta/|\mathcal N_\varepsilon|$ and taking a union bound, we obtain
that, with probability at least $1-\delta$, Eq.~\eqref{eq:main} holds
simultaneously for all $\by\in\mathcal N_\varepsilon$, with
$\log(e/\delta)$ replaced by $\log(e|\mathcal N_\varepsilon|/\delta)$.
Since $|\mathcal N_\varepsilon|\le (3D/\varepsilon)^d$, this introduces the
additional term $d\log(3D/\varepsilon)$ in the logarithmic confidence factor.

On this event, let
\[
\bx^\star\in\arg\min_{\bx\in\cK}\sum_{t=1}^T\ell_t(\bx),
\]
which exists because $\cK$ is compact and the losses are continuous. Under an adaptive adversary, $\bx^\star$ may be a random function of the learner's trajectory. This poses no difficulty here because the preceding union-bound event holds simultaneously for every point of the deterministic net $\mathcal N_\varepsilon$.
This is the uniformization step that is unavailable from a merely pointwise high-probability theorem without the union bound.
Choose $\by\in\mathcal N_\varepsilon$ such that $\|\by-\bx^\star\|\le\varepsilon$.
By $G$-Lipschitz continuity,
\[
\sum_{t=1}^T\ell_t(\by)
\le
\sum_{t=1}^T\ell_t(\bx^\star)+GT\varepsilon .
\]
Combining this approximation bound with the uniform fixed-comparator bound over
$\mathcal N_\varepsilon$ gives Eq.~\eqref{eq:uniform_cover}. The simplified
bound Eq.~\eqref{eq:uniform_cover_simplified} follows by substituting
$\varepsilon=D/T$: the term multiplied by the absolute constant $\Cabs$ absorbs
$\log 3$ into $\log(eT)$, while the explicit $4(d+1)GD$ term keeps the factor
$d\log(3T)$.
\end{proof}

\subsection{Detailed Proofs}

Our proof revisits the standard online gradient descent analysis of Hazan, Agarwal, and Kale~\citeyearpar{hazan2007logarithmic}, which achieves $O(\log T)$ regret for strongly convex losses.
This decomposition is the starting point for obtaining $O(\log T)$ regret with high probability in the two-point feedback setting.
\begin{lemma}
Suppose the loss function $\ell_t(\cdot)$ is $\mu$-strongly convex with $\mu>0$.
	Let the sequence $\{\bx_t\}$ be generated by Algorithm~\ref{alg:sgd}. 
For any $\bx \in (1-\xi)\cK$, it holds that
\begin{equation}\label{eq:regret}
\begin{aligned}
\sum_{t=1}^T \hat{\ell}_t(\bx_t) - \hat{\ell}_t(\bx)
\leq&
\sum_{t=1}^T \frac{\|\bx_t - \bx\|^2 - \|\bx_{t+1} - \bx\|^2}{2\eta_t} + \sum_{t=1}^T \frac{\eta_t}{2} \|\bg_t\|^2 
\\&
+ \sum_{t=1}^T \langle \nabla \hat{\ell}_t(\bx_t) - \bg_t, \bx_t - \bx \rangle - \frac{\mu}{2} \sum_{t=1}^T \|\bx_t - \bx\|^2.
\end{aligned}
\end{equation}
\end{lemma}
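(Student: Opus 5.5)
The plan is to run the usual one-step projected OGD analysis on the smoothed losses $\hat\ell_t$ and then sum over $t$. Fix $\bx\in(1-\xi)\cK$.

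\textbf{Step 1 (strong convexity of the smoothed loss).} By Lemma~\ref{lem:mu}, $\hat\ell_t$ is $\mu$-strongly convex on $(1-\xi)\cK$. It is differentiable there, because smoothing over the ball yields a differentiable function; this is the setting in which $\nabla\hat\ell_t$ appears in Lemma~\ref{lem:g_prop}. Since $\bx_t\in(1-\xi)\cK$, applying strong convexity at the pair $(\bx_t,\bx)$ gives
\[
\hat\ell_t(\bx_t)-\hat\ell_t(\bx)\le \langle\nabla\hat\ell_t(\bx_t),\bx_t-\bx\rangle-\frac{\mu}{2}\|\bx_t-\bx\|^2 .
\]
I then split the inner product as $\langle \bg_t,\bx_t-\bx\rangle+\langle\nabla\hat\ell_t(\bx_t)-\bg_t,\bx_t-\bx\rangle$. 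The second piece is exactly the martingale term in Eq.~\eqref{eq:regret}, and no expectation is taken: the identity holds pathwise.

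\textbf{Step 2 (projection step).} Because $(1-\xi)\cK$ is convex and closed and $\bx$ lies in it, the projection onto $(1-\xi)\cK$ is nonexpansive toward $\bx$. Hence
\[
\|\bx_{t+1}-\bx\|^2\le\|\bx_t-\eta_t\bg_t-\bx\|^2=\|\bx_t-\bx\|^2-2\eta_t\langle\bg_t,\bx_t-\bx\rangle+\eta_t^2\|\bg_t\|^2 .
\]
Rearranging gives
\[
\langle\bg_t,\bx_t-\bx\rangle\le\frac{\|\bx_t-\bx\|^2-\|\bx_{t+1}-\bx\|^2}{2\eta_t}+\frac{\eta_t}{2}\|\bg_t\|^2 .
\]

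\textbf{Step 3 (sum).} Substitute the Step 2 bound into Step 1 and sum over $t=1,\dots,T$. This yields Eq.~\eqref{eq:regret} term by term. I deliberately do not telescope the distance terms with $\eta_t=2/(\mu t)$ at this stage, because the statement keeps them in raw form.

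The only point that needs care is the applicability of strong convexity in Step 1. The issue is that $\hat\ell_t$ must be defined on an open neighbourhood of $(1-\xi)\cK$ so that its gradient is meaningful. Lemma~\ref{lem:query_feasible} guarantees that the smoothing ball stays inside $\cK$ for points of $(1-\xi)\cK$, which takes care of this. If one prefers to avoid gradients altogether, any subgradient can be used in place of $\nabla\hat\ell_t(\bx_t)$, and the argument goes through unchanged. Everything else is routine algebra.
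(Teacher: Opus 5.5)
Your proof is correct and follows the paper's argument step for step: nonexpansiveness of the projection onto $(1-\xi)\cK$ bounds $\langle \bg_t,\bx_t-\bx\rangle$, strong convexity of $\hat\ell_t$ (Lemma~\ref{lem:mu}) gives the first-order bound, and writing $\nabla\hat\ell_t(\bx_t)=\bg_t+(\nabla\hat\ell_t(\bx_t)-\bg_t)$ and summing yields Eq.~\eqref{eq:regret}. Your differentiability remark is a harmless addition that the paper omits; strictly, an open neighbourhood of $(1-\xi)\cK$ on which $\hat\ell_t$ is defined requires $\alpha<\xi r$ (the paper takes $\alpha=\xi r/2$), but one-sided directional derivatives along segments inside $(1-\xi)\cK$ already suffice for the first-order inequality.
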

\begin{proof}
First, by nonexpansiveness of Euclidean projection onto the convex set $(1-\xi)\cK$,
\begin{align*}
	\|\bx_{t+1} - \bx\|^2 
	=& 
	\norm{ \Pi_{(1-\xi)\cK} (\bx_t - \eta_t \bg_t  ) -\bx}^2 
	\leq 
	\norm{ \bx_t - \eta_t \bg_t  -\bx }^2\\ 
	=& 
	\|\bx_t - \bx\|^2 - 2\eta_t \langle \bg_t, \bx_t - \bx \rangle + \eta_t^2 \|\bg_t\|^2.
\end{align*} 
Thus,
\begin{equation}\label{eq:reg}
	\langle \bg_t, \bx_t - \bx \rangle \leq \frac{\|\bx_t - \bx\|^2 - \|\bx_{t+1} - \bx\|^2}{2\eta_t} + \frac{\eta_t}{2} \|\bg_t\|^2.
\end{equation}
	
By the $\mu$-strong convexity of $\hat{\ell}_t$ from Lemma~\ref{lem:mu},
\begin{align*}
\hat{\ell}_t(\bx_t) - \hat{\ell}_t(\bx) \le \langle \nabla \hat{\ell}_t(\bx_t), \bx_t - \bx \rangle - \frac{\mu}{2} \|\bx_t - \bx\|^2.
\end{align*}
Thus,
\begin{align*}
\sum_{t=1}^T \hat{\ell}_t(\bx_t) - \hat{\ell}_t(\bx)
\le& 
\sum_{t=1}^T \langle \nabla \hat{\ell}_t(\bx_t), \bx_t - \bx \rangle - \frac{\mu}{2} \sum_{t=1}^T \|\bx_t - \bx\|^2
\\
=& 
\sum_{t=1}^T \langle \bg_t, \bx_t - \bx \rangle + \sum_{t=1}^T \langle \nabla \hat{\ell}_t(\bx_t) - \bg_t, \bx_t - \bx \rangle - \frac{\mu}{2} \sum_{t=1}^T \|\bx_t - \bx\|^2
\\
\stackrel{\eqref{eq:reg}}{\le}& 
\sum_{t=1}^T \frac{\|\bx_t - \bx\|^2 - \|\bx_{t+1} - \bx\|^2}{2\eta_t} + \sum_{t=1}^T \frac{\eta_t}{2} \|\bg_t\|^2 
\\&
+ \sum_{t=1}^T \langle \nabla \hat{\ell}_t(\bx_t) - \bg_t, \bx_t - \bx \rangle - \frac{\mu}{2} \sum_{t=1}^T \|\bx_t - \bx\|^2.
\end{align*}
\end{proof}

Taking expectation in Eq.~\eqref{eq:regret} gives
\begin{align*}
\EB\left[\sum_{t=1}^T \hat{\ell}_t(\bx_t) - \hat{\ell}_t(\bx)\right]
\leq 
\EB\left[
\sum_{t=1}^T \frac{\|\bx_t - \bx\|^2 - \|\bx_{t+1} - \bx\|^2}{2\eta_t}  
+ \sum_{t=1}^T \frac{\eta_t}{2}\|\bg_t\|^2
- \frac{\mu}{2} \sum_{t=1}^T \|\bx_t - \bx\|^2
\right].
\end{align*}
The original analysis of Agarwal, Dekel, and Xiao~\citeyearpar{agarwal2010optimal} combines the logarithmic-regret proof of Hazan, Agarwal, and Kale~\citeyearpar{hazan2007logarithmic} with a deterministic control on the two-point estimator, leading to an $O(d^2\log T)$ horizon-dependent expected term. Shamir's later geometric observation gives linear-in-$d$ second-moment control in expectation; our contribution is to obtain the corresponding weighted high-probability control under adaptivity.

To obtain a high-probability bound, we must control the extra term $\sum_{t=1}^T \langle \nabla \hat{\ell}_t(\bx_t) - \bg_t, \bx_t - \bx \rangle$, which vanishes in expectation.
A direct application of the Hoeffding--Azuma inequality incurs an additional $O(\sqrt{T})$ term in the analysis of Agarwal, Dekel, and Xiao~\citeyearpar{agarwal2010optimal}.

Next, the following lemma gives a high-probability upper bound on
$\sum_{t=1}^T \langle \nabla \hat{\ell}_t(\bx_t) - \bg_t, \bx_t - \bx \rangle$.
The proof uses a Freedman-style exponential supermartingale rather than a
direct Freedman tail bound with a random variance threshold.

\begin{lemma}\label{lem:Z_sum_freedman}
	Suppose the loss function $\ell_t(\bx)$ is $G$-Lipschitz continuous and the convex set
	$\mathcal{K}$ satisfies Eq.~\eqref{eq:DB}. Assume also Assumption~\ref{ass:nonanticipating}, and assume the algorithm maintains $\bx_t\in(1-\xi)\cK$ with $\alpha\le \xi r$ so all queried points lie in $\cK$. For any $\mu>0$, any fixed $\bx\in (1-\xi)\mathcal K$, and any $0<\delta<1$, with probability at least
	$1-\delta$, it holds that
	\begin{equation}\label{eq:Z_sum}
		\sum_{t=1}^T \langle \nabla \hat{\ell}_t(\bx_t) - \bg_t, \bx_t - \bx \rangle
		\le
		\frac{\mu}{4}\sum_{t=1}^{T}\|\bx_t - \bx\|^2
		+
		\frac{4 d G^2}{\mu}\log\frac{e}{\delta}
		+
		2(d+1)GD\log\frac{e}{\delta}.
	\end{equation}
\end{lemma}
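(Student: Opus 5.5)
Write $Z_t:=\langle \nabla \hat{\ell}_t(\bx_t)-\bg_t,\bx_t-\bx\rangle$. Since $\bx$ is fixed and $\bx_t,\ell_t$ are $\cF_{t-1}$-measurable, Lemma~\ref{lem:g_prop} makes $\{Z_t\}$ a martingale difference sequence adapted to $\{\cF_t\}$. The plan is to build the exponential supermartingale
\[
M_t=\exp\Big(\lambda\sum_{s\le t}Z_s-\lambda^2 c\sum_{s\le t}\sigma_s^2\Big),\qquad \sigma_s^2:=\frac{G^2}{d}\,d^2\,\|\bx_s-\bx\|^2=dG^2\|\bx_s-\bx\|^2,
\]
for a fixed deterministic $\lambda>0$. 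We then apply Markov's inequality (Ville's inequality is not needed, since we only want time $T$). This gives, with probability $1-\delta$,
\[
\sum_t Z_t\le \lambda c\, dG^2\sum_t\|\bx_t-\bx\|^2+\lambda^{-1}\log(1/\delta).
\]
Choosing $\lambda=\mu/(4c\,dG^2)$ makes the first term exactly $\frac{\mu}{4}\sum_t\|\bx_t-\bx\|^2$ and the second $\frac{4c\,dG^2}{\mu}\log(1/\delta)$. Because $\lambda$ is deterministic, no peeling over a random variance level is needed, which is why the statement absorbs the variance into the curvature term.

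The key step is the conditional MGF bound
\[
\EB[e^{\lambda Z_t}\mid\cF_{t-1}]\le \exp(c\lambda^2 dG^2\|\bx_t-\bx\|^2),
\]
valid for $\lambda$ up to a threshold of order $1/((d+1)GD)$. Freeze the history and set $\bw=\bx_t-\bx$. Write $\bg_t=d\,f(\bu)\bu$ with $f(\bu)=(\ell_t(\bx_t+\alpha\bu)-\ell_t(\bx_t-\alpha\bu))/(2\alpha)$. Then $f$ is odd and $G$-Lipschitz on the sphere, because each query is $G\alpha$-Lipschitz in $\bu$. Hence $\EB f=0$, and by spherical (Lévy) concentration $f$ is sub-Gaussian with $\psi_2$-norm $\lesssim G/\sqrt d$. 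Likewise $\langle\bu,\bw\rangle$ is sub-Gaussian with $\psi_2$-norm $\lesssim\|\bw\|/\sqrt d$. The product $d\,f(\bu)\langle\bu,\bw\rangle$ is therefore sub-exponential with $\psi_1$-norm $\lesssim d\cdot (G/\sqrt d)(\|\bw\|/\sqrt d)=G\|\bw\|$. Centering preserves this up to constants, and $\|\nabla\hat\ell_t\|\le G$ by Lemma~\ref{lem:mu}.

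The sub-exponential MGF bound then gives $\EB e^{\lambda Z_t}\le e^{c\lambda^2G^2\|\bw\|^2}$ for $\lambda\lesssim 1/(G\|\bw\|)$. This is even better than the $dG^2$ variance we need. Alternatively, the crude Bernstein route uses $|Z_t|\le (d+1)G\cdot 2D$, since $\|\bg_t\|\le dG$ and $\|\bw\|\le 2D$, together with $\EB[Z_t^2\mid\cF_{t-1}]\le \EB\|\bg_t\|^2\|\bw\|^2\cdot$const. Either way, the admissible range is $\lambda\le 1/(2(d+1)GD)$ or so.

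The remaining issue, and the main obstacle, is the case where the curvature-optimal $\lambda^\star=\mu/(4c\,dG^2)$ exceeds this range. In that case we take $\lambda=1/(2(d+1)GD)$ instead. The variance term is then at most $\lambda^\star c\,dG^2\sum\|\bw_t\|^2\le\frac{\mu}{4}\sum\|\bw_t\|^2$, since $\lambda<\lambda^\star$, and the $\log$ term becomes $2(d+1)GD\log(1/\delta)$. So $\lambda=\min\{\lambda^\star,1/(2(d+1)GD)\}$ yields the bound as the sum of both pieces.

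I would use the Bernstein-type form (bounded increments plus the conditional second moment from Lemma~\ref{lem:g_prop}), since it directly produces the stated constants $4dG^2/\mu$ and $2(d+1)GD$. Replacing $\log(1/\delta)$ by $\log(e/\delta)$ absorbs the constant slack. The care needed is in making the $e^x\le1+x+x^2$ step valid for $\lambda|Z_t|\le1$, so that $c$ can be taken as $1$ with $\EB[Z_t^2\mid\cF_{t-1}]\le \EB\|\bg_t\|^2\|\bw\|^2\le c_0dG^2\|\bw\|^2$; any absolute $c_0$ is then absorbed into the constants.
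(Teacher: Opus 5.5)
Your argument is essentially the paper's. $Z_t$ is a martingale difference with $|Z_t|\le b:=2(d+1)GD$. An exponential supermartingale with a deterministic $\lambda$, plus Markov's inequality at time $T$, trades the predictable variance against $\lambda^{-1}\log(1/\delta)$. Then $\lambda$ balances the curvature scale $\mu/(4dG^2)$ against $1/b$. Your choice $\lambda=\min\{\mu/(4dG^2),1/b\}$ with $e^x\le 1+x+x^2$ on $|x|\le 1$ is a harmless variant of the paper's $\lambda=1/(b+4dG^2/\mu)$ with the Bernstein factor $\lambda^2/(2(1-\lambda b/3))$. It gives the stated constants, since $1/\lambda\le 4dG^2/\mu+b$, provided the conditional variance is bounded with constant one.

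That last proviso is the one step that fails as written. The variance bound you finally commit to is $\EB[Z_t^2\mid\cF_{t-1}]\le c_0dG^2\|\bx_t-\bx\|^2$ via Lemma~\ref{lem:g_prop}. Its unspecified absolute constant $c_0$ turns $4dG^2/\mu$ into $4c_0dG^2/\mu$. Replacing $\log(1/\delta)$ by $\log(e/\delta)$ absorbs only additive slack, not a multiplicative factor, so the explicit constants in the statement are not reached.

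The fix is the paper's direct computation. First, $|\ell_t(\bx_t+\alpha\bu_t)-\ell_t(\bx_t-\alpha\bu_t)|\le 2G\alpha$ gives $\langle\bg_t,\bx_t-\bx\rangle^2\le d^2G^2\langle\bu_t,\bx_t-\bx\rangle^2$. Second, $\EB[\langle\bu_t,\bx_t-\bx\rangle^2\mid\cF_{t-1}]=\|\bx_t-\bx\|^2/d$. Together these give $\EB[Z_t^2\mid\cF_{t-1}]\le dG^2\|\bx_t-\bx\|^2$ with constant one, which is exactly the $\sigma_t^2$ in your supermartingale.

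Your product-of-sub-Gaussians observation is correct and sharper in order. It gives a variance proxy of order $G^2\|\bx_t-\bx\|^2$ and an admissible range of order $1/(GD)$, hence a bound of order $(G^2/\mu+GD)\log(1/\delta)$. However, it carries unspecified constants, so it would not reproduce the displayed constants either.
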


\begin{proof}
	Let
	\[
	Z_t := \langle \nabla \hat{\ell}_t(\bx_t) - \bg_t, \bx_t - \bx \rangle.
	\]
	We use the filtration from Assumption~\ref{ass:nonanticipating}: conditionally on $\cF_{t-1}$, the loss $\ell_t$, the point $\bx_t$, and the comparator $\bx$ are fixed, while $\bu_t$ is the fresh randomness. Lemma~\ref{lem:g_prop} gives
	\[
	\mathbb{E}[\bg_t \mid \cF_{t-1}] = \nabla \hat{\ell}_t(\bx_t),
	\]
	we have
	\[
	\mathbb{E}[Z_t \mid \cF_{t-1}] = 0.
	\]
	Hence $(Z_t)_{t=1}^T$ is a martingale difference sequence.
	
	Next, we bound $|Z_t|$. By Cauchy--Schwarz,
	\begin{align*}
		|Z_t|
		&\le
		\|\nabla \hat{\ell}_t(\bx_t) - \bg_t\| \cdot \|\bx_t - \bx\|
		\\
		&\le
		\bigl(\|\nabla \hat{\ell}_t(\bx_t)\| + \|\bg_t\|\bigr)\|\bx_t - \bx\|.
	\end{align*}
	Because $\ell_t$ is $G$-Lipschitz continuous, Lemma~\ref{lem:mu} implies
	$\|\nabla \hat{\ell}_t(\bx_t)\|\le G$. Moreover,
	\begin{align*}
		\|\bg_t\|
		\stackrel{\eqref{eq:g}}{=}
		\frac{d}{2\alpha}
		\big|\ell_t(\bx_t+\alpha\bu_t)-\ell_t(\bx_t-\alpha\bu_t)\big|
		\|\bu_t\|
		\le
		\frac{d}{2\alpha}\cdot G \cdot 2\alpha \|\bu_t\|^2
		\le dG.
	\end{align*}
	Using Eq.~\eqref{eq:DB}, we have $\|\bx_t-\bx\|\le 2D$, and therefore
	\[
	|Z_t|\le 2(d+1)GD.
	\]
	Set
	\[
	b := 2(d+1)GD.
	\]
	
	Now we bound the conditional second moment. Since smoothing preserves
	Lipschitz continuity, Lemma~\ref{lem:mu} gives
	$\|\nabla\hat{\ell}_t(\bx_t)\|\le G$. Together with the conditional
	unbiasedness
	$\mathbb{E}[\bg_t\mid \cF_{t-1}] = \nabla \hat{\ell}_t(\bx_t)$,
	this yields the following conditional variance identity:
	\begin{align*}
		\mathbb{E}[Z_t^2\mid \cF_{t-1}]
		&=
		\mathbb{E}\Big[\big\langle \nabla \hat{\ell}_t(\bx_t)-\bg_t,\bx_t-\bx\big\rangle^2 \,\Big|\, \cF_{t-1}\Big]
		\\
		&=
		\mathbb{E}\Big[\langle \bg_t,\bx_t-\bx\rangle^2 \,\Big|\, \cF_{t-1}\Big]
		-
		\big\langle \nabla \hat{\ell}_t(\bx_t),\bx_t-\bx\big\rangle^2
		\\
		&\le
		\mathbb{E}\Big[\langle \bg_t,\bx_t-\bx\rangle^2 \,\Big|\, \cF_{t-1}\Big].
	\end{align*}
	Let $\by_t := \bx_t-\bx$, which is $\cF_{t-1}$-measurable. Then
	\begin{align*}
		\langle \bg_t,\by_t\rangle^2
		&=
		\left(
		\frac{d}{2\alpha}
		\big(\ell_t(\bx_t+\alpha\bu_t)-\ell_t(\bx_t-\alpha\bu_t)\big)
		\langle \bu_t,\by_t\rangle
		\right)^2
		\\
		&=
		\frac{d^2}{4\alpha^2}
		\big(\ell_t(\bx_t+\alpha\bu_t)-\ell_t(\bx_t-\alpha\bu_t)\big)^2
		\langle \bu_t,\by_t\rangle^2
		\\
		&\le
		\frac{d^2}{4\alpha^2}\cdot (2\alpha G)^2 \langle \bu_t,\by_t\rangle^2
		\\
		&=
		d^2 G^2 \langle \bu_t,\by_t\rangle^2,
	\end{align*}
	where we used the $G$-Lipschitz continuity of $\ell_t$. Thus the factor
	$d$ in the conditional variance comes from the spherical identity in the
	next display, rather than from the deterministic bound $\|\bg_t\|\le dG$.
	Taking conditional expectation and using the standard identity
	$\mathbb{E}[\langle \bu_t,\by_t\rangle^2\mid \cF_{t-1}]
	= \|\by_t\|^2/d$, we get
	\[
	\mathbb{E}[Z_t^2\mid \cF_{t-1}]
	\le
	d G^2 \|\bx_t-\bx\|^2.
	\]
	Define the predictable quadratic variation
	\[
	V_T := \sum_{t=1}^T \mathbb{E}[Z_t^2\mid \cF_{t-1}].
	\]
	Then
	\[
	V_T \le d G^2 \sum_{t=1}^T \|\bx_t-\bx\|^2.
	\]
	
	We now apply the Freedman-style exponential supermartingale lemma
	(Lemma~\ref{lem:freed_self_norm}) to $(Z_t)_{t=1}^T$: for every
	$\lambda\in[0,3/b)$, with probability at least $1-\delta$,
	\[
	\sum_{t=1}^T Z_t
	\le
	\frac{\lambda}{2(1-\lambda b/3)}V_T
	+
	\frac{1}{\lambda}\log\frac{1}{\delta}.
	\]
	Substituting the bound on $V_T$ yields
	\[
	\sum_{t=1}^T Z_t
	\le
	\frac{\lambda d G^2}{2(1-\lambda b/3)}
	\sum_{t=1}^T \|\bx_t-\bx\|^2
	+
	\frac{1}{\lambda}\log\frac{1}{\delta}.
	\]
	
	Choose
	\[
	\lambda := \frac{1}{b + 4dG^2/\mu}.
	\]
	Then $\lambda < 1/b < 3/b$, so the lemma applies. Since $\lambda b\le 1$,
	we have
	\[
	1-\lambda b/3 \ge \frac{2}{3},
	\]
	and the chosen value of $\lambda$ gives the direct estimate
	\[
	\lambda dG^2
	=
	\frac{dG^2}{b+4dG^2/\mu}
	\le
	\frac{\mu}{4}.
	\]
	Therefore
	\[
	\frac{\lambda d G^2}{2(1-\lambda b/3)}
	\le
	\frac{3}{4}\lambda dG^2
	\le
	\frac{3\mu}{16}
	\le
	\frac{\mu}{4},
	\]
	Hence
	\[
	\sum_{t=1}^T Z_t
	\le
	\frac{\mu}{4}\sum_{t=1}^T \|\bx_t-\bx\|^2
	+
		\left(b+\frac{4dG^2}{\mu}\right)\log\frac{e}{\delta}.
	\]
	Recalling $b=2(d+1)GD$, we conclude that with probability at least $1-\delta$,
	\[
	\sum_{t=1}^T \langle \nabla \hat{\ell}_t(\bx_t)-\bg_t,\bx_t-\bx\rangle
	\le
	\frac{\mu}{4}\sum_{t=1}^{T}\|\bx_t - \bx\|^2
	+
		\frac{4 d G^2}{\mu}\log\frac{e}{\delta}
		+
		2(d+1)GD\log\frac{e}{\delta}.
	\]
	This completes the proof.
\end{proof}

The bound in Eq.~\eqref{eq:Z_sum} is the step that keeps the high-probability
regret logarithmic. Its leading term is
$\frac{\mu}{4}\sum_{t=1}^T\norm{\bx_t - \bx}^2$, which can be absorbed by the
curvature term in Eq.~\eqref{eq:regret}:
\begin{align*}
\sum_{t=1}^T \frac{\|\bx_t - \bx\|^2 - \|\bx_{t+1} - \bx\|^2}{2\eta_t} 
-\frac{\mu}{2} \sum_{t=1}^T \|\bx_t - \bx\|^2.
\end{align*} 
With the step size $\eta_t = 2/(\mu t)$, this cancellation yields the following
bound with no additional $\sqrt T$ term.

\begin{lemma}
Suppose the loss function $\ell_t(\cdot)$ is $\mu$-strongly convex with $\mu>0$ and $G$-Lipschitz continuous. Assume Assumption~\ref{ass:nonanticipating} and $\alpha\le \xi r$.
Let the sequence $\{\bx_t\}$ be generated by Algorithm~\ref{alg:sgd}. 
Set $\eta_t = 2/(\mu t)$. For any fixed $\bx \in (1-\xi)\cK$ and any $0<\delta<1$, with probability at least $1-\delta$,
\begin{equation}\label{eq:reg_2}
\sum_{t=1}^T \hat{\ell}_t(\bx_t) - \hat{\ell}_t(\bx)
\leq
\sum_{t=1}^T \frac{\norm{\bg_t}^2}{\mu t} 
+ \frac{4 d G^2}{\mu}\log\frac{e}{\delta}
	+
	2(d+1)GD\log\frac{e}{\delta}.
\end{equation}
\end{lemma}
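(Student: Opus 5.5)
The plan is to combine the deterministic decomposition in Eq.~\eqref{eq:regret} with the high-probability martingale bound of Lemma~\ref{lem:Z_sum_freedman}, and then show that the resulting quadratic terms telescope to something nonpositive under $\eta_t = 2/(\mu t)$. Fix $\bx\in(1-\xi)\cK$. Equation~\eqref{eq:regret} holds pathwise. Lemma~\ref{lem:Z_sum_freedman} applies to this same fixed $\bx$, and its hypotheses hold: $\bx_t\in(1-\xi)\cK$ by the projection step, $\alpha\le\xi r$, and Assumption~\ref{ass:nonanticipating} is in force. So on an event of probability at least $1-\delta$ we may replace the martingale term $\sum_t\langle\nabla\hat\ell_t(\bx_t)-\bg_t,\bx_t-\bx\rangle$ by
\[
\frac{\mu}{4}\sum_{t=1}^T\|\bx_t-\bx\|^2 + \frac{4dG^2}{\mu}\log\frac{e}{\delta}+2(d+1)GD\log\frac{e}{\delta}.
\]
The last two summands are exactly the additive terms in Eq.~\eqref{eq:reg_2}.

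On that event, the right-hand side of Eq.~\eqref{eq:regret} contains the combination
\[
\sum_{t=1}^T\frac{\|\bx_t-\bx\|^2-\|\bx_{t+1}-\bx\|^2}{2\eta_t}-\frac{\mu}{4}\sum_{t=1}^T\|\bx_t-\bx\|^2,
\]
since the curvature term $-\frac{\mu}{2}\sum_t\|\bx_t-\bx\|^2$ minus the absorbed $\frac{\mu}{4}$ leaves net curvature $-\frac{\mu}{4}$. With $1/(2\eta_t)=\mu t/4$, I will apply Abel summation, writing $a_t=\|\bx_t-\bx\|^2$. This gives
\[
\sum_{t=1}^T\frac{\mu t}{4}(a_t-a_{t+1}) = \frac{\mu}{4}\sum_{t=1}^T a_t-\frac{\mu T}{4}a_{T+1},
\]
using the convention that the coefficient at $t=0$ is zero. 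The first part cancels exactly against $-\frac{\mu}{4}\sum_t a_t$. The remaining part $-\frac{\mu T}{4}a_{T+1}$ is nonpositive, so we drop it. The whole combination is therefore at most $0$.

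The gradient-energy term is handled by direct substitution: $\frac{\eta_t}{2}\|\bg_t\|^2=\frac{\|\bg_t\|^2}{\mu t}$, which is the first term of Eq.~\eqref{eq:reg_2}.

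There is no genuine obstacle here. The one point that needs care is the cancellation: Lemma~\ref{lem:Z_sum_freedman} must deliver the constant $\mu/4$ and not anything larger, because the step size $2/(\mu t)$ spends exactly $\mu/4$ of the curvature on the telescoping. This is why the learning rate is $2/(\mu t)$ rather than the usual $1/(\mu t)$. I would verify the index bookkeeping at $t=1$ carefully; the $t=1$ contribution is $\frac{\mu}{4}a_1$, which cancels as well. The $\|\bg_t\|^2$ sum is deliberately left random here, to be controlled later by the spherical concentration argument.
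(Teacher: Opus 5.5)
Your proposal is correct and follows the paper's proof essentially step for step. Like the paper, you substitute Lemma~\ref{lem:Z_sum_freedman} into Eq.~\eqref{eq:regret} to leave net curvature $-\frac{\mu}{4}\sum_t\|\bx_t-\bx\|^2$, telescope with $1/(2\eta_t)=\mu t/4$ down to $-\frac{\mu T}{4}\|\bx_{T+1}-\bx\|^2\le 0$, and identify $\frac{\eta_t}{2}\|\bg_t\|^2=\|\bg_t\|^2/(\mu t)$.
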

\begin{proof}
We have
\begin{align*}
\sum_{t=1}^T \hat{\ell}_t(\bx_t) - \hat{\ell}_t(\bx)
\stackrel{\eqref{eq:regret}}{\leq}&
\sum_{t=1}^T \frac{\|\bx_t - \bx\|^2 - \|\bx_{t+1} - \bx\|^2}{2\eta_t} + \sum_{t=1}^T \frac{\eta_t}{2} \|\bg_t\|^2 
\\&
+ \sum_{t=1}^T \langle \nabla \hat{\ell}_t(\bx_t) - \bg_t, \bx_t - \bx \rangle - \frac{\mu}{2} \sum_{t=1}^T \|\bx_t - \bx\|^2
\\
\stackrel{\eqref{eq:Z_sum}}{\leq}&
\sum_{t=1}^T \frac{\|\bx_t - \bx\|^2 - \|\bx_{t+1} - \bx\|^2}{2\eta_t} + \sum_{t=1}^T \frac{\eta_t}{2} \|\bg_t\|^2
\\&
- \frac{\mu}{4}\sum_{t=1}^{T}\|\bx_t - \bx\|^2  
+ \frac{4 d G^2}{\mu}\log\frac{e}{\delta}
	+
	2(d+1)GD\log\frac{e}{\delta}
\\
=&
\sum_{t=1}^T  \frac{\mu t\left(\|\bx_t - \bx\|^2 - \|\bx_{t+1} - \bx\|^2\right)}{4} 
- \frac{\mu}{4}\sum_{t=1}^{T}\|\bx_t - \bx\|^2
\\
&+ \sum_{t=1}^T \frac{\norm{\bg_t}^2}{\mu t} 
+ \frac{4 d G^2}{\mu}\log\frac{e}{\delta}
	+
	2(d+1)GD\log\frac{e}{\delta},
\end{align*}
where the last equality is because of the step size $\eta_t = \frac{2}{\mu t}$.

The distance terms telescope:
\begin{align*}
&\sum_{t=1}^T  \frac{\mu t\left(\|\bx_t - \bx\|^2 - \|\bx_{t+1} - \bx\|^2\right)}{4} 
- \frac{\mu}{4}\sum_{t=1}^{T}\|\bx_t - \bx\|^2
\\
=&
\mu \sum_{t=1}^T \frac{(t-1)\|\bx_t - \bx\|^2 - t  \|\bx_{t+1} - \bx\|^2}{4}
\\
=&
\mu \cdot \frac{  - T \|\bx_{T+1} - \bx\|^2 }{4}
\\
\leq& 
0.
\end{align*}
Combining the preceding bounds gives the result.
\end{proof}

It remains to show that $\sum_{t=1}^T \norm{\bg_t}^2/(\mu t)$ is upper bounded
by $O(d\log T)$ with high probability. This step gives
Algorithm~\ref{alg:sgd} linear dimension dependence in the leading
fixed-comparator term. We extend the geometric technique of
Shamir~\citeyearpar{shamir2017optimal}, originally used for expectation
bounds, to a high-probability statement.

\begin{lemma}
	Suppose the losses are $G$-Lipschitz on $\cK$, Assumption~\ref{ass:nonanticipating} holds, and Algorithm~\ref{alg:sgd} maintains $\bx_t\in(1-\xi)\cK$ with $\alpha\le \xi r$. Let $\bg_t$ be defined as in Eq.~\eqref{eq:g}. Given $\mu>0$ and $0<\delta<1$, with probability at least $1-\delta$, it holds that
	\begin{equation}\label{eq:G_sum}
		\sum_{t=1}^T \frac{\norm{\bg_t}^2}{\mu t} 
		\leq 
		\Cabs \cdot \frac{d G^2}{\mu}\cdot \left( \log(eT) + \log \frac{e}{\delta} \right),
	\end{equation}
	where $\Cabs$ is an absolute constant.
\end{lemma}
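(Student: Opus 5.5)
Write $\|\bg_t\|^2 = \frac{d^2}{4\alpha^2}\bigl(\ell_t(\bx_t+\alpha\bu_t)-\ell_t(\bx_t-\alpha\bu_t)\bigr)^2$, since $\|\bu_t\|=1$. The plan is to show that this quantity is, conditionally on $\cF_{t-1}$, sub-exponential with Orlicz norm $O(dG^2)$. Then a weighted martingale Bernstein argument will control $\sum_t \|\bg_t\|^2/t$.

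\textbf{Step 1 (spherical concentration).} Freeze the history $\cF_{t-1}$, so that $\ell_t$ and $\bx_t$ are fixed. Define $h_t(\bu):=\ell_t(\bx_t+\alpha\bu)$ on $\SB^{d-1}$; this is well defined by Lemma~\ref{lem:query_feasible}. It is $G\alpha$-Lipschitz, so by Lévy's concentration on the sphere, $h_t(\bu_t)-\EB h_t$ is sub-Gaussian with $\psi_2$-norm at most $c\,G\alpha/\sqrt d$. Since $\bu_t$ and $-\bu_t$ have the same law, the difference satisfies
\[
\ell_t(\bx_t+\alpha\bu_t)-\ell_t(\bx_t-\alpha\bu_t) = \bigl(h_t(\bu_t)-\EB h_t\bigr)-\bigl(h_t(-\bu_t)-\EB h_t\bigr).
\]
Each bracket has $\psi_2$-norm $O(G\alpha/\sqrt d)$. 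By the triangle inequality for $\psi_2$ norms, the difference has $\psi_2$-norm $O(G\alpha/\sqrt d)$, and its square has $\psi_1$-norm $O(G^2\alpha^2/d)$. Multiplying by $d^2/(4\alpha^2)$ gives
\[
\bigl\|\,\|\bg_t\|^2\,\bigr\|_{\psi_1\mid\cF_{t-1}}\le c_1 dG^2 ,
\]
with $c_1$ uniform over the frozen history. In particular $\EB[\|\bg_t\|^2\mid\cF_{t-1}]\le c_1 dG^2$, consistent with Lemma~\ref{lem:g_prop}.

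\textbf{Step 2 (weighted MGF bound).} Set $X_t:=\|\bg_t\|^2/t\ge 0$. The conditional sub-exponential bound gives, for $0\le\lambda\le 1/(2c_1 dG^2)$ and all $t$,
\[
\EB[e^{\lambda X_t}\mid\cF_{t-1}]\le \exp\bigl(c_2\lambda dG^2/t\bigr).
\]
This uses the standard fact that a nonnegative variable with $\psi_1$-norm $K$ satisfies $\EB e^{s Y}\le e^{c sK}$ for $s\le 1/(2K)$, applied with $s=\lambda/t\le\lambda$. Hence
\[
M_k:=\exp\Bigl(\lambda\sum_{t\le k}X_t - c_2\lambda dG^2\sum_{t\le k}1/t\Bigr)
\]
is a nonnegative supermartingale with $\EB M_T\le 1$. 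Take $\lambda=1/(2c_1dG^2)$ and apply Markov's inequality. With probability at least $1-\delta$,
\[
\sum_t X_t\le c_2 dG^2\sum_{t\le T}\frac1t+2c_1dG^2\log\frac1\delta\le C\,dG^2\bigl(\log(eT)+\log(e/\delta)\bigr).
\]
Dividing by $\mu$ gives Eq.~\eqref{eq:G_sum}.

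\textbf{Main obstacle.} The delicate point is Step 1, specifically keeping the $\psi_1$ bound uniform and valid conditionally under adaptivity. Two things are needed. First, the Lipschitz constant of $h_t$ must be $G\alpha$ on the whole sphere, which requires that all query points lie in $\cK$; Lemma~\ref{lem:query_feasible} provides this. Second, the concentration constant for uniform measure on $\SB^{d-1}$ must be dimension-free at scale $1/\sqrt d$ (Ledoux/Vershynin).

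The mean-centering is what produces the crucial $1/d$ gain: the difference of the two queries cancels the mean $\EB h_t$. The crude bound $|\ell_t(\bx_t+\alpha\bu)-\ell_t(\bx_t-\alpha\bu)|\le 2G\alpha$ would lose this factor of $d$. If the symmetric-difference argument has an issue, the fallback is to apply concentration directly to the odd function $\bu\mapsto h_t(\bu)-h_t(-\bu)$. This function is $2G\alpha$-Lipschitz and has mean zero by symmetry, so it yields the same bound.
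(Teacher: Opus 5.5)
Your argument is correct, but it reaches the bound by a more direct route than the paper.

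The paper first uses $m_t^+=m_t^-$ and $(a+b+c)^2\le 3(a^2+b^2+c^2)$ to obtain Eq.~\eqref{eq:g_norm}. It then treats the two signs separately through Lemma~\ref{lem:sum_X}, combined with a union bound. For each sign it centers $Y_t=X_t^2$ into the martingale difference $W_t=Y_t-\EB[Y_t\mid\cF_{t-1}]$. It invokes the two-sided sub-exponential MGF bound for centered variables and a Bernstein-type Chernoff step (Lemma~\ref{lem:weighted_subexp}) to control $\sum_t W_t/t$. Finally, it adds the predictable means $\sum_t\EB[Y_t\mid\cF_{t-1}]/t$ separately to produce the $\log(eT)$ term.

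You instead bound the conditional $\psi_1$-norm of $\|\bg_t\|^2$ itself via the $\psi_2$ triangle inequality, so no union bound over the two signs is needed. You then exploit nonnegativity: if $Y\ge 0$ and $\EB e^{Y/K}\le 2$, Jensen (concavity of $x\mapsto x^{sK}$) gives $\EB e^{sY}\le 2^{sK}$ for $0\le s\le 1/K$. This exponent is linear in $s$. As a result, $\exp\bigl(\lambda\sum_{t\le k}\|\bg_t\|^2/t-c\lambda dG^2\sum_{t\le k}1/t\bigr)$ is a supermartingale outright. A single Markov step then yields the $\log(eT)$ and $\log(1/\delta)$ terms together, with no centering and no split between the quadratic and linear regimes of a Bernstein tail.

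The paper's route is more modular and mirrors Shamir's decomposition. However, only an upper bound on a nonnegative weighted sum is needed, so your one-sided argument loses nothing and is shorter and more elementary once spherical concentration is granted. It also makes the paper's separate $d=1$ case unnecessary: there $\|\bg_t\|\le G$ deterministically, so the $\psi_1$ bound is trivial.
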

\begin{proof}
{
If $d=1$, then the Lipschitz condition gives $\|\bg_t\|\le G$ deterministically, and hence
\[
\sum_{t=1}^T\frac{\|\bg_t\|^2}{\mu t}
\le
\frac{G^2}{\mu}\sum_{t=1}^T\frac{1}{t}
\le
\frac{G^2}{\mu}\log(eT),
\]
which is covered by the stated bound. We therefore assume $d\ge2$ below.
For each $t$, define the conditional means
\[
m_t^+
:=
\EB[\ell_t(\bx_t+\alpha\bu_t)\mid\cF_{t-1}],
\qquad
m_t^-
:=
\EB[\ell_t(\bx_t-\alpha\bu_t)\mid\cF_{t-1}],
\]
and the centered spherical fluctuations
\[
X_t^+
:=
\ell_t(\bx_t+\alpha\bu_t)-m_t^+,
\qquad
X_t^-
:=
\ell_t(\bx_t-\alpha\bu_t)-m_t^-.
\]
Conditionally on $\cF_{t-1}$, the loss $\ell_t$ and the point $\bx_t$ are
fixed, while $\bu_t$ is uniform on the sphere. The maps
$\bu\mapsto\ell_t(\bx_t+\alpha\bu)$ and
$\bu\mapsto\ell_t(\bx_t-\alpha\bu)$ are both $(G\alpha)$-Lipschitz on
$\SB^{d-1}$ with respect to Euclidean distance. Applying Lemma~\ref{lem:sum_X} with failure probability
$\delta/2$ to the $+$ fluctuations gives
\[
\sum_{t=1}^T \frac{(X_t^+)^2}{t}
\le
\Cabs \cdot \frac{(G\alpha)^2}{d}
\left( \log(eT) + \log \frac{2e}{\delta} \right).
\]
Applying the same lemma with failure probability $\delta/2$ to the $-$
fluctuations gives
\[
\sum_{t=1}^T \frac{(X_t^-)^2}{t}
\le
\Cabs \cdot \frac{(G\alpha)^2}{d}
\left( \log(eT) + \log \frac{2e}{\delta} \right).
\]
By a union bound, the two displays hold simultaneously with probability at
least $1-\delta$. On this event, Eq.~\eqref{eq:g_norm} gives the desired
linear-in-$d$ control: the factor $d^2/\alpha^2$ in the estimator norm is
multiplied by the spherical concentration scale $(G\alpha)^2/d$. Absorbing the
constant $\log 2$ into $\Cabs$, we obtain}
\begin{align*}
		\sum_{t=1}^T \frac{\norm{\bg_t}^2}{\mu t} 
		\stackrel{\eqref{eq:g_norm}}{\leq}&
		\frac{3d^2}{4\alpha^2 \mu} \sum_{t=1}^T \frac{(X_t^+)^2+(X_t^-)^2}{t} 
		\\
		\leq& 
		\frac{3d^2}{4\alpha^2 \mu} \cdot \left( 2\Cabs \cdot \frac{(G\alpha)^2}{d} \cdot \left( \log(eT) + \log \frac{2e}{\delta} \right)\right)
		\\
		=&
		\Cabs \cdot \frac{d G^2}{\mu}\cdot \left( \log(eT) + \log \frac{e}{\delta} \right).
\end{align*}
\end{proof}

\begin{lemma}
\label{lem:reg_3}
Suppose the loss function $\ell_t(\cdot)$ is $\mu$-strongly convex with $\mu>0$ and $G$-Lipschitz continuous. Assume Assumption~\ref{ass:nonanticipating} and $\alpha\le \xi r$.
Let the sequence $\{\bx_t\}$ be generated by Algorithm~\ref{alg:sgd}. 
For any fixed $\bx \in (1-\xi)\cK$ and any $0<\delta<1$, with probability at least $1-\delta$,
\begin{equation}\label{eq:reg_3}
\sum_{t=1}^T \hat{\ell}_t(\bx_t) - \hat{\ell}_t(\bx)
\leq
\Cabs \cdot \frac{d G^2}{\mu}\cdot \left( \log(eT) + \log \frac{e}{\delta} \right)
+ 
4(d+1)GD\log\frac{e}{\delta}.
\end{equation}
\end{lemma}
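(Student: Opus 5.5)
The plan is to combine the two high-probability lemmas just proved, namely Eq.~\eqref{eq:reg_2} and Eq.~\eqref{eq:G_sum}, through a union bound. Each lemma is used with failure probability $\delta/2$.

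First, apply the lemma yielding Eq.~\eqref{eq:reg_2} at confidence level $\delta/2$, for the fixed comparator $\bx\in(1-\xi)\cK$. This is legitimate because $\eta_t=2/(\mu t)$, $\alpha\le\xi r$, and Assumption~\ref{ass:nonanticipating} are all in force. On an event $E_1$ of probability at least $1-\delta/2$, we get
\[
\sum_{t=1}^T \hat{\ell}_t(\bx_t) - \hat{\ell}_t(\bx)\le \sum_{t=1}^T\frac{\|\bg_t\|^2}{\mu t}+\frac{4dG^2}{\mu}\log\frac{2e}{\delta}+2(d+1)GD\log\frac{2e}{\delta}.
\]
Second, apply Eq.~\eqref{eq:G_sum} at level $\delta/2$. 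Its hypotheses are the Lipschitz condition, the feasibility of $\bx_t$, and $\alpha\le\xi r$, all of which hold. This gives an event $E_2$ of probability at least $1-\delta/2$ on which $\sum_t\|\bg_t\|^2/(\mu t)\le \Cabs\, dG^2\mu^{-1}(\log(eT)+\log(2e/\delta))$. By the union bound, $E_1\cap E_2$ has probability at least $1-\delta$, and on it both displays hold. Substituting the second into the first gives a bound on $\sum_t \hat{\ell}_t(\bx_t)-\hat{\ell}_t(\bx)$.

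The remaining work is constant bookkeeping, and this is the only place needing care.

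\textbf{The $dG^2/\mu$ terms.} Use $\log(2e/\delta)=\log 2+\log(e/\delta)\le 2\log(e/\delta)$, which holds since $\log(e/\delta)\ge1\ge\log 2$. The term $\frac{4dG^2}{\mu}\log\frac{2e}{\delta}$ is then at most $\frac{8dG^2}{\mu}\log\frac{e}{\delta}$. Together with the $\Cabs$ term, this is absorbed into a redefined absolute constant $\Cabs$ multiplying $\log(eT)+\log(e/\delta)$. This matches the paper's convention that $\Cabs$ may change from line to line.

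\textbf{The domain term.} Similarly, $2(d+1)GD\log(2e/\delta)\le 4(d+1)GD\log(e/\delta)$, which is exactly the explicit second term in Eq.~\eqref{eq:reg_3}.

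No further probabilistic argument is required. The only subtlety is that the comparator $\bx$ must be fixed independently of the directions $\bu_t$, so that the martingale lemma (Lemma~\ref{lem:Z_sum_freedman}, feeding Eq.~\eqref{eq:reg_2}) applies. The energy bound Eq.~\eqref{eq:G_sum} does not involve $\bx$ at all.
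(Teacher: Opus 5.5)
Your proposal is correct and matches the paper's proof: apply Eq.~\eqref{eq:reg_2} and Eq.~\eqref{eq:G_sum} each at level $\delta/2$, take a union bound, then use $\log(2e/\delta)\le 2\log(e/\delta)$. This turns the domain term into exactly $4(d+1)GD\log(e/\delta)$, and an enlarged $\Cabs$ absorbs the remaining $dG^2/\mu$ terms, including the $\log(2e/\delta)$ inside the $\Cabs$ term from Eq.~\eqref{eq:G_sum}.
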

\begin{proof}
Apply Eq.~\eqref{eq:reg_2} with failure probability $\delta/2$ and Eq.~\eqref{eq:G_sum} with failure probability $\delta/2$. By the union bound, both events hold with probability at least $1-\delta$. On this event,
\begin{align*}
\sum_{t=1}^T \hat{\ell}_t(\bx_t) - \hat{\ell}_t(\bx)
\leq&
\sum_{t=1}^T \frac{\norm{\bg_t}^2}{\mu t} 
+ \frac{4 d G^2}{\mu}\log\frac{2e}{\delta}
	+
	2(d+1)GD\log\frac{2e}{\delta}
\\
\leq&
\Cabs \cdot \frac{d G^2}{\mu}\cdot \left( \log(eT) + \log \frac{2e}{\delta} \right)
+ \frac{4 d G^2}{\mu}\log\frac{2e}{\delta}
	+
	2(d+1)GD\log\frac{2e}{\delta}\\
\leq&
\Cabs \cdot \frac{d G^2}{\mu}\cdot \left( \log(eT) + \log \frac{e}{\delta} \right) 
+
4(d+1)GD\log\frac{e}{\delta}.
\end{align*}
Here we used $\log(2e/\delta)\le 2\log(e/\delta)$ for $\delta\in(0,1)$ and enlarged the absolute constant $\Cabs$.
\end{proof}

Combining the preceding lemmas proves Theorem~\ref{thm:main}.
\begin{proof}[Proof of Theorem~\ref{thm:main}]
Since $0\in\cK$, the initialization gives $\bx_1=0\in(1-\xi)\cK$. The projection step in Algorithm~\ref{alg:sgd} therefore maintains $\bx_t\in(1-\xi)\cK$ for every round. The parameter choice also gives $\alpha=\xi r/2$, so Lemma~\ref{lem:query_feasible} applies with the interior-margin condition.
Combining Lemma~\ref{lem:diff} and Lemma~\ref{lem:reg_3} gives
	\begin{align*}
		&\sum_{t=1}^{T}\frac{ \ell_t(\bx_t + \alpha \bu_t) + \ell_t(\bx_t - \alpha \bu_t)}{2} - \sum_{t=1}^T \ell_t(\bx)
		\\
		\stackrel{\eqref{eq:diff}}{\leq}&
		\sum_{t=1}^T \hat{\ell}_t(\bx_t) - \sum_{t=1}^T \hat{\ell}_t((1-\xi)\bx) + 3TG\alpha + TGD\xi
		\\
		\stackrel{\eqref{eq:reg_3}}{\leq}&
		\Cabs \cdot \frac{d G^2}{\mu}\cdot \left( \log(eT) + \log \frac{e}{\delta} \right) 
	+
	4(d+1)GD\log\frac{e}{\delta}
		+ 3TG\alpha + TGD\xi\\
		=&
		\Cabs \cdot \frac{d G^2}{\mu}\cdot \left( \log(eT) + \log \frac{e}{\delta} \right) 
	+
	4(d+1)GD\log\frac{e}{\delta}
		+
		G\log(eT)\left(3 + \frac{2D}{r}\right),
	\end{align*}
	where the second inequality is also because $(1-\xi)\bx \in (1-\xi)\cK$ if $\bx \in \cK$, and the last equality uses $\alpha = \frac{\log(eT)}{T}$ and $\xi = 2\alpha / r$. The condition $2\alpha\le r$ gives $\xi\le 1$ and $\alpha=\xi r/2$, so Lemma~\ref{lem:query_feasible} ensures that the smoothing points and the two queried points remain in $\operatorname{int}(\mathcal K)$.
\end{proof}

\section{Conclusion}

We proved a logarithmic high-probability fixed-comparator regret bound for
strongly convex online convex optimization with two-point bandit feedback. This
is the high-probability counterpart of the comparator-wise expected theorem of
Agarwal, Dekel, and Xiao~\citeyearpar{agarwal2010optimal}, whose
horizon-dependent term scales as $O(d^2\log T)$.
Our bound is
\[
O\left(\frac{dG^2}{\mu}\left(\log T+\log(1/\delta)\right)+dGD\log(1/\delta)+G\log T\left(1+\frac{D}{r}\right)\right),
\]
whose leading fixed-comparator term has logarithmic horizon dependence and
linear dimension dependence. The expected theorem of Agarwal, Dekel, and
Xiao can be minimized to obtain a pseudo-regret bound, whereas our covering
corollary gives a high-probability bound against the realized hindsight
comparator. The latter requires a uniformization step and therefore pays the
covering-number factor. At the fixed-comparator level, this supplies the
missing logarithmic high-probability ingredient; after standard covering, it
yields the corresponding realized full-comparator guarantee.

Technically, the proof departs from the standard reduction-based analysis.
It develops a direct high-confidence framework for multi-point feedback and
combines martingale concentration with geometric control of the two-point
estimator.
At the fixed-comparator level, this reduces the dimension dependence of the
horizon-dependent term from $O(d^2)$ to $O(d)$. This fixed-comparator bound is
the technical core that enables the realized full-comparator logarithmic
high-probability corollary after uniformization.

\newpage

\appendix

\section{Useful Lemmas}

\begin{lemma}[Corollary 2.6 of Ledoux~\citeyearpar{ledoux2001concentration}]
\label{lem:s}
Let $\bu \sim \SB^{d-1}$ be uniformly drawn from the unit sphere in $\mathbb R^d$. Then
\begin{equation}\label{eq:s}
	\Pr\left(| h(\bu) - \EB [h(\bu)] | \ge \tau \right) \leq 2 \exp\left(-\frac{c_1 d \tau^2}{L^2}\right),
\end{equation} 
where $c_1$ is an absolute constant and $h$ is $L$-Lipschitz on $\SB^{d-1}$ with respect to Euclidean distance.
\end{lemma}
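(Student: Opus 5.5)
The statement is the classical concentration of Lipschitz functions on the sphere, i.e.\ Lévy's isoperimetric inequality in functional form. My plan is to prove it through the logarithmic Sobolev inequality on $\SB^{d-1}$ and the Herbst argument, which yields the $d$-dependence in the exponent directly.

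\emph{Reductions.} First I reduce to $h$ smooth with $\|\nabla_{\SB} h\|\le L$, where $\nabla_{\SB}$ is the tangential gradient. A Euclidean $L$-Lipschitz function on the sphere is also $L$-Lipschitz for the geodesic distance, since geodesic distance dominates chordal distance. One can then mollify by convolution on $SO(d)$ and pass to the limit by dominated convergence. By scaling I may take $L=1$, and by replacing $h$ with $-h$ it suffices to bound the upper tail and then double the constant.

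\emph{Log-Sobolev and Herbst.} For $d\ge 3$ the sphere has Ricci curvature $d-2>0$, so the Bakry--Émery criterion gives the log-Sobolev inequality
\[
\operatorname{Ent}(f^2)\le \frac{2}{d-1}\EB\|\nabla_{\SB} f\|^2
\]
for the uniform measure. Applying it to $f=e^{\lambda h/2}$ and setting $H(\lambda)=\EB e^{\lambda h}$ gives
\[
\lambda H'(\lambda)-H(\lambda)\log H(\lambda)\le \frac{\lambda^2}{2(d-1)}H(\lambda).
\]
This says $\frac{d}{d\lambda}\bigl(\lambda^{-1}\log H(\lambda)\bigr)\le \frac{1}{2(d-1)}$. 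Integrating from $0$, where $\lambda^{-1}\log H(\lambda)\to\EB h$, yields
\[
\EB e^{\lambda(h-\EB h)}\le e^{\lambda^2/(2(d-1))}.
\]
A Chernoff bound optimized over $\lambda$ then gives $\Pr(h-\EB h\ge\tau)\le e^{-(d-1)\tau^2/2}$. Since $d-1\ge d/2$ for $d\ge2$, this proves \eqref{eq:s} with $c_1=1/4$.

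\emph{Small dimensions.} For $d=1$ and $d=2$ the inequality holds after shrinking $c_1$. A Lipschitz $h$ satisfies $|h(\bu)-\EB h|\le 2L$ because $\operatorname{diam}\SB^{d-1}\le 2$. So for $\tau\ge 2L$ the probability is $0$, and for $\tau<2L$ the right side $2e^{-c_1d\tau^2/L^2}\ge 2e^{-8c_1}\ge 1$ once $c_1\le \log 2/8$. The same diameter argument also covers $d=2$ without curvature.

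\emph{Main obstacle.} The only nontrivial input is the curvature-based log-Sobolev constant of order $1/d$ on the sphere. If I want to avoid Bakry--Émery, I would instead obtain concentration from the Gaussian representation $\bu=\bg/\|\bg\|$ with $\bg\sim N(0,I_d)$. The function $\bg\mapsto\sqrt d\,h(\bg/\|\bg\|)$ is $O(1)$-Lipschitz on the event $\{\|\bg\|\ge\sqrt d/2\}$, whose complement has probability $e^{-cd}$. Gaussian concentration together with a Kirszbraun extension off that event then gives the same bound with a worse constant. Since the paper simply cites Ledoux, either route suffices.
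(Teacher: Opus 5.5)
Your proof is correct. It takes a genuinely different route from the paper, which does not prove this lemma at all and simply imports it from Ledoux. You derive it from the logarithmic Sobolev inequality on $\SB^{d-1}$ through the Herbst argument, and handle $d\le 2$ with a diameter argument. The smoothing reduction and the Herbst computation are both correct.

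\textbf{What each approach buys.}
Your route makes the constant explicit. It also gives concentration directly around the mean. That is the form the paper actually needs later, since the fluctuations $X_t^\pm$ are centered at conditional means. Isoperimetric proofs via L\'evy's inequality naturally give concentration around the median and need a separate median-to-mean step. The citation buys brevity but hides both points.

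\textbf{A small slip.}
Ricci curvature $d-2$ with the plain Bakry--\'Emery criterion yields only
$\operatorname{Ent}(f^2)\le \frac{2}{d-2}\EB\|\nabla_{\SB} f\|^2$, not $\frac{2}{d-1}$. The constant $\frac{2}{d-1}$ is the sharp Mueller--Weissler constant, which needs the dimensional curvature--dimension refinement. This does not affect the result. With $\frac{2}{d-2}$, your Herbst argument gives
$e^{-(d-2)\tau^2/(2L^2)}\le e^{-d\tau^2/(6L^2)}$ for $d\ge 3$. Combined with your small-dimension case, the statement then holds with $c_1=\min\{1/6,\log 2/8\}$.

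\textbf{The Gaussian fallback.}
This route also works, but two steps are glossed over. As sketched, it concentrates around $\EB\tilde F/\sqrt d$ rather than $\EB h(\bu)$, and it carries an additive $e^{-cd}$ term. Both are repaired by two observations. First, only $\tau\le 2L$ is nontrivial, so $e^{-cd}\le e^{-(c/4)d\tau^2/L^2}$ there. Second, a standard recentering handles the change of center. For a scalar function, a McShane extension suffices in place of Kirszbraun.
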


\begin{lemma}[Freedman-style exponential supermartingale]\label{lem:freed_self_norm}
	Let $(Z_t,\mathcal F_t)$ be a martingale difference sequence such that
	$|Z_t| \le b$ almost surely for all $t$, where $b>0$ is a constant. Define
	\[
	V_t := \sum_{s=1}^t \mathbb{E}[Z_s^2\mid \mathcal F_{s-1}].
	\]
	Then for every $\lambda\in[0,3/b)$, the process
	\[
	M_t(\lambda)
	:=
	\exp\!\left(
	\lambda\sum_{s=1}^t Z_s
	-
	\frac{\lambda^2}{2(1-\lambda b/3)}V_t
	\right)
	\]
	is a nonnegative supermartingale. Consequently, for every $\delta\in(0,1)$ and every $\lambda\in(0,3/b)$,
	with probability at least $1-\delta$,
	\[
	\sum_{t=1}^T Z_t
	\le
	\frac{\lambda}{2(1-\lambda b/3)}V_T
	+
	\frac{1}{\lambda}\log\frac{1}{\delta}.
	\]
\end{lemma}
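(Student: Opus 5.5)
The statement is Lemma~\ref{lem:freed_self_norm}. I would prove it by the standard Bernstein-type bound on the conditional moment generating function, followed by Ville's maximal inequality (or simply Markov's inequality at time $T$).

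\textbf{Step 1: one-step MGF bound.} Fix $\lambda\in[0,3/b)$ and write $\sigma_s^2:=\EB[Z_s^2\mid\cF_{s-1}]$. For $|x|\le b$ and $\lambda\ge0$,
\[
e^{\lambda x}=1+\lambda x+\sum_{k\ge2}\frac{\lambda^k x^k}{k!}\le 1+\lambda x+\frac{\lambda^2x^2}{2}\sum_{k\ge2}\frac{2(\lambda b)^{k-2}}{k!}.
\]
Since $k!\ge 2\cdot 3^{k-2}$, we get $\tfrac{2}{k!}\le 3^{-(k-2)}$. Summing the geometric series then gives
\[
e^{\lambda x}\le 1+\lambda x+\frac{\lambda^2x^2}{2(1-\lambda b/3)}.
\]
Apply this with $x=Z_s$ and take conditional expectations, using $\EB[Z_s\mid\cF_{s-1}]=0$. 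Together with $1+y\le e^y$, this yields
\[
\EB\!\left[e^{\lambda Z_s}\mid\cF_{s-1}\right]\le \exp\!\left(\frac{\lambda^2\sigma_s^2}{2(1-\lambda b/3)}\right).
\]

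\textbf{Step 2: supermartingale property.} The increment $V_t-V_{t-1}=\sigma_t^2$ is $\cF_{t-1}$-measurable. We can therefore factor it out:
\[
\EB[M_t(\lambda)\mid\cF_{t-1}]=M_{t-1}(\lambda)\,e^{-\lambda^2\sigma_t^2/(2(1-\lambda b/3))}\,\EB[e^{\lambda Z_t}\mid\cF_{t-1}]\le M_{t-1}(\lambda).
\]
Nonnegativity is obvious, and $M_0=1$, so $\EB M_T(\lambda)\le1$.

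\textbf{Step 3: tail bound.} For $\lambda\in(0,3/b)$, Markov's inequality gives $\Pr(M_T(\lambda)\ge1/\delta)\le\delta$. On the complement,
\[
\lambda\sum_t Z_t-\frac{\lambda^2}{2(1-\lambda b/3)}V_T<\log(1/\delta).
\]
Dividing by $\lambda>0$ gives the displayed inequality. Because $\lambda$ is fixed in advance, and is not chosen depending on $V_T$, no union bound or peeling over $\lambda$ is needed. This is why the lemma is stated for fixed $\lambda$ with $V_T$ allowed to be random.

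\textbf{Main obstacle.} The only delicate point is the elementary inequality in Step 1, specifically getting the constant $1/(1-\lambda b/3)$ through $k!\ge2\cdot3^{k-2}$. Everything else is routine. One must also check that $\sigma_s^2$ is predictable, so it can be pulled out of the conditional expectation; this holds by definition.
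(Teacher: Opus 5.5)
Your proposal is correct and follows the paper's proof essentially step for step. Both arguments use the same Taylor expansion with $|x|^k\le x^2b^{k-2}$ and $k!\ge 2\cdot 3^{k-2}$ to get $e^{\lambda x}\le 1+\lambda x+\frac{\lambda^2x^2}{2(1-\lambda b/3)}$, then conditional expectation with $1+y\le e^y$, pulling out the predictable increment of $V_t$ to get the supermartingale property, and finally Markov's inequality applied to $M_T(\lambda)$ at the fixed $\lambda$.
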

\begin{proof}
{
	Fix $\lambda\in[0,3/b)$ and set
	\[
	a_\lambda:=\frac{\lambda^2}{2(1-\lambda b/3)}.
	\]
	For every $|x|\le b$,
	\[
	e^{\lambda x}
	=
	1+\lambda x+\sum_{k=2}^\infty \frac{\lambda^k x^k}{k!}.
	\]
	Since $x^k\le |x|^k\le x^2 b^{k-2}$ for all $k\ge 2$, we have
	\[
	e^{\lambda x}
	\le
	1+\lambda x+\frac{x^2}{b^2}\sum_{k=2}^\infty \frac{(\lambda b)^k}{k!}
	=
	1+\lambda x+\frac{\phi(\lambda b)}{b^2}x^2.
	\]
	For $0\le u<3$,
	\[
	\phi(u)=\sum_{k=2}^\infty \frac{u^k}{k!}
	\le
	\sum_{k=2}^\infty \frac{u^k}{2\cdot 3^{k-2}}
	=
	\frac{u^2}{2(1-u/3)}.
	\]
	Thus $\phi(\lambda b)/b^2\le a_\lambda$, and therefore
	\[
	e^{\lambda x}\le 1+\lambda x+a_\lambda x^2.
	\]
	Applying this with $x=Z_t$ and taking conditional expectation given $\mathcal F_{t-1}$ gives
	\[
	\mathbb E[e^{\lambda Z_t}\mid \mathcal F_{t-1}]
	\le
	1+a_\lambda\mathbb E[Z_t^2\mid \mathcal F_{t-1}],
	\]
	where we used $\mathbb E[Z_t\mid \mathcal F_{t-1}]=0$. Hence, using $1+u\le e^u$,
	\[
	\mathbb E[e^{\lambda Z_t}\mid \mathcal F_{t-1}]
	\le
	\exp\!\left(
	a_\lambda\mathbb E[Z_t^2\mid \mathcal F_{t-1}]
	\right).
	\]
	\[
	M_t(\lambda)
	=
	M_{t-1}(\lambda)
	\exp\!\left(
	\lambda Z_t
	-
	a_\lambda\mathbb E[Z_t^2\mid\mathcal F_{t-1}]
	\right),
	\]
	and the preceding display implies
	\[
	\mathbb E[M_t(\lambda)\mid \mathcal F_{t-1}]
	\le M_{t-1}(\lambda).
	\]
	Thus $(M_t(\lambda))_{t\ge0}$ is a nonnegative supermartingale with $M_0(\lambda)=1$.
	
	Applying Markov's inequality, for any $\delta\in(0,1)$,
	\begin{align*}
		\Pr\!\left(
		\lambda \sum_{t=1}^T Z_t
		-
		a_\lambda V_T
		\ge
		\log\frac{1}{\delta}
		\right)
		&=
		\Pr\!\left(M_T(\lambda)\ge \frac{1}{\delta}\right)
		\\
		&\le
		\delta\,\mathbb{E}[M_T(\lambda)]
	\le \delta.
	\end{align*}
	Equivalently, with probability at least $1-\delta$,
	\[
	\sum_{t=1}^T Z_t
	\le
	\frac{a_\lambda}{\lambda}V_T
	+
	\frac{1}{\lambda}\log\frac{1}{\delta}.
	\]
	Since $a_\lambda/\lambda=\lambda/(2(1-\lambda b/3))$, the claimed bound follows.
}
\end{proof}

Next, we define the Orlicz norm. 
Then we will introduce definitions and properties of subgaussian and subexponential distribution.
\begin{definition}
Let $X$ be a random variable, we can define its Orlicz norms $\|X\|_{\psi_2}$ and $\|X\|_{\psi_1}$ as follows:
\[
\|X\|_{\psi_2} = \inf \left\{ \tau > 0 : \mathbb{E} \exp \left( \frac{X^2}{\tau^2} \right) \le 2 \right\},
\]
and
\begin{equation}\label{eq:psi_1}
\|X\|_{\psi_1} = \inf \left\{ \tau > 0 : \mathbb{E} \exp \left( \frac{|X|}{\tau} \right) \le 2 \right\}.	
\end{equation}
\end{definition}

\begin{lemma}[Proposition 2.6.1 of Vershynin~\citeyearpar{Vershynin2026}]
	\label{lem:subgau_1}
Let $X$ be a random \emph{subgaussian} variable. The following properties are equivalent, with the parameters $K_i > 0$ differing by at most an absolute constant factor.
\begin{enumerate}
	\item[(i)] \textit{(Tails)} There exists $K_1 > 0$ such that
	\begin{equation}\label{eq:subgau}
		\Pr \{ |X| \ge \tau \} \le 2 \exp(-\tau^2/K_1^2) \quad \text{for all } \tau \ge 0.
	\end{equation}
	
	\item[(ii)] \textit{(Moments)} There exists $K_2 > 0$ such that
	\[
	\|X\|_{L^p} = (\mathbb{E}|X|^p)^{1/p} \le K_2 \sqrt{p} \quad \text{for all } p \ge 1.
	\]
	
	\item[(iii)] \textit{(MGF of $X^2$)} There exists $K_3 > 0$ such that
	\[
	\mathbb{E} \exp(X^2 / K_3^2) \le 2.
	\]
\end{enumerate}
Moreover, if $\mathbb{E}X = 0$ then properties $(i)$--$(iii)$ are equivalent to the following one:

\begin{enumerate}
	\item[(iv)] \textit{(MGF)} There exists $K_4 > 0$ such that
	\[
	\mathbb{E} \exp(\lambda X) \le \exp(K_4^2 \lambda^2) \quad \text{for all } \lambda \in \mathbb{R}.
	\]
\end{enumerate}
\end{lemma}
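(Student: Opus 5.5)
The statement just above is Lemma~\ref{lem:subgau_1}, the equivalence of the four subgaussian characterizations (Vershynin's Proposition~2.6.1). It can be cited as a standard fact. If I were to prove it, I would run the implications (i)$\Rightarrow$(ii)$\Rightarrow$(iii)$\Rightarrow$(i), and then separately handle (iii)$\Leftrightarrow$(iv) under $\EB X=0$. At each step I would track that the new constant is an absolute multiple of the old one.

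\textbf{(i)$\Rightarrow$(ii).} By homogeneity, assume $K_1=1$. The layer-cake formula gives
\[
\EB|X|^p=\int_0^\infty p\tau^{p-1}\Pr(|X|\ge\tau)\,d\tau\le 2p\int_0^\infty \tau^{p-1}e^{-\tau^2}d\tau=p\,\Gamma(p/2).
\]
Stirling's bound $\Gamma(p/2)\le (p/2)^{p/2}$ then gives $\|X\|_{L^p}\le p^{1/p}\sqrt{p/2}\lesssim\sqrt p$.

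\textbf{(ii)$\Rightarrow$(iii).} Expand the exponential:
\[
\EB e^{\lambda^2X^2}=1+\sum_{k\ge1}\frac{\lambda^{2k}\EB X^{2k}}{k!}\le 1+\sum_{k\ge1}\frac{(2\lambda^2K_2^2k)^k}{k!}.
\]
Since $k!\ge(k/e)^k$, this is at most $1+\sum_k(2e\lambda^2K_2^2)^k$. Choosing $\lambda^2=1/(4eK_2^2)$ makes the sum at most $1$, which gives $K_3\asymp K_2$.

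\textbf{(iii)$\Rightarrow$(i).} Apply Markov to $e^{X^2/K_3^2}$:
\[
\Pr(|X|\ge\tau)\le 2e^{-\tau^2/K_3^2}.
\]

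\textbf{(iv)$\Rightarrow$(i).} Apply the Chernoff bound with $\lambda=\tau/(2K_4^2)$. This gives $\Pr(X\ge\tau)\le e^{-\tau^2/(4K_4^2)}$, and the same bound for $-X$, so (i) holds with $K_1=2K_4$.

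\textbf{(iii)$\Rightarrow$(iv), assuming $\EB X=0$.} This is the step I expect to be the main obstacle, because two regimes of $\lambda$ need different arguments.

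For $|\lambda|\le 1/K_3$, I would use $e^x\le x+e^{x^2}$. Taking expectations and using $\EB X=0$ gives
\[
\EB e^{\lambda X}\le \EB e^{\lambda^2X^2}\le\left(\EB e^{X^2/K_3^2}\right)^{\lambda^2K_3^2}\le 2^{\lambda^2K_3^2}\le e^{\lambda^2K_3^2},
\]
where the middle inequality is Jensen's inequality applied to the concave power $\lambda^2K_3^2\le1$.

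For $|\lambda|\ge 1/K_3$, I would use $\lambda x\le \lambda^2K_3^2/2+x^2/(2K_3^2)$. This gives
\[
\EB e^{\lambda X}\le e^{\lambda^2K_3^2/2}\,\EB e^{X^2/(2K_3^2)}\le e^{\lambda^2K_3^2/2}\sqrt2\le e^{\lambda^2K_3^2},
\]
where the last inequality uses $\lambda^2K_3^2\ge1$, so that $\sqrt2\le e^{\lambda^2K_3^2/2}$.

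Combining the two regimes gives (iv) with $K_4=K_3$. All constants throughout are absolute.
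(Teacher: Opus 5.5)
Your proof is correct, and it is essentially the standard textbook argument of Vershynin that the paper cites without proof: the same cycle through tails, moments and the MGF of $X^2$, and the same two-regime treatment of (iv) via $e^x\le x+e^{x^2}$ and $2ab\le a^2+b^2$. The one slip is in (i)$\Rightarrow$(ii), where $\Gamma(x)\le x^x$ fails for $x\in[1/2,1)$ (e.g.\ $\Gamma(1/2)=\sqrt{\pi}>\sqrt{1/2}$); use $\Gamma(x)\le 3x^x$ for $x\ge 1/2$ instead, which only changes the absolute constant in $K_2$.
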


The preceding properties can also be represented by Orlicz norms as follows.

\begin{lemma}[Proposition 2.6.6 of Vershynin~\citeyearpar{Vershynin2026}]
	\label{lem:subgau_2}
Every subgaussian random variable $X$ satisfies the following bounds.

\begin{enumerate}
	\item[(i)] \textit{(Tails)} $\Pr \{ |X| \ge \tau \} \le 2 \exp \left( -c\tau^2 / \|X\|_{\psi_2}^2 \right)$ for all $\tau \ge 0$. 
	
	\item[(ii)] \textit{(Moments)} $\|X\|_{L^p} \le C \|X\|_{\psi_2} \sqrt{p}$ for all $p \ge 1$.
	
	\item[(iii)] \textit{(MGF of $X^2$)} $\mathbb{E} \exp \left( X^2 / \|X\|_{\psi_2}^2 \right) \le 2$.  
	
	\item[(iv)] \textit{(MGF)} If $\mathbb{E}X = 0$ then $\mathbb{E} \exp(\lambda X) \le \exp \left( C \lambda^2 \|X\|_{\psi_2}^2 \right)$ for all $\lambda \in \mathbb{R}$.
\end{enumerate}
Here $C, c > 0$ are absolute constants. Moreover, up to absolute constant factors, $\|X\|_{\psi_2}$ is the smallest possible number that makes each of these statements valid.
\end{lemma}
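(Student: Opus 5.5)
By homogeneity of every statement in $X$, I will normalize so that $\|X\|_{\psi_2}=1$ (the case $\|X\|_{\psi_2}=0$ is trivial since then $X=0$ a.s.) and then rescale at the end. First I will check that the infimum in the definition of $\|X\|_{\psi_2}$ is attained. The map $\tau\mapsto \EB\exp(X^2/\tau^2)$ is nonincreasing. Along any sequence $\tau_n\downarrow 1$, the integrands increase to $\exp(X^2)$, so monotone convergence gives $\EB\exp(X^2)\le 2$. This is exactly item (iii), and everything else is derived from it.

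For (i), I would apply Markov's inequality to $\exp(X^2)$:
\[
\Pr\{|X|\ge\tau\}=\Pr\{e^{X^2}\ge e^{\tau^2}\}\le 2e^{-\tau^2},
\]
so (i) holds with $c=1$. For (ii), I would integrate this tail bound:
\[
\EB|X|^p=\int_0^\infty p\tau^{p-1}\Pr\{|X|\ge\tau\}\,d\tau\le 2p\int_0^\infty \tau^{p-1}e^{-\tau^2}d\tau = p\,\Gamma(p/2).
\]
Then Stirling's bound $\Gamma(p/2)\le (p/2)^{p/2}$ for $p\ge 2$, together with a direct check for $1\le p<2$, gives $(p\Gamma(p/2))^{1/p}\le C\sqrt p$, since $p^{1/p}\le e^{1/e}$.

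For (iv), which I expect to be the only real obstacle, I would split into two regimes of $\lambda$.

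\emph{Small $\lambda$.} For $|\lambda|\le 1$ (after adjusting constants), I would expand
\[
\EB e^{\lambda X}=1+\sum_{k\ge2}\frac{\lambda^k\EB X^k}{k!},
\]
where the linear term vanishes because $\EB X=0$. Using (ii) as $\EB|X|^k\le (C\sqrt k)^k$ and $k!\ge (k/e)^k$, the $k$-th term is at most $(C'|\lambda|/\sqrt k)^k$. For $|\lambda|\le 1/(2C')$ the series is then bounded by $C''\lambda^2$, so $\EB e^{\lambda X}\le 1+C''\lambda^2\le e^{C''\lambda^2}$.

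\emph{Large $\lambda$.} For the remaining $|\lambda|$, I would use $\lambda X\le \lambda^2/2+X^2/2$. This gives
\[
\EB e^{\lambda X}\le e^{\lambda^2/2}\,\EB e^{X^2/2}\le e^{\lambda^2/2}\sqrt2
\]
by Jensen's inequality applied to $\EB e^{X^2}\le 2$. Since $|\lambda|$ is bounded below by an absolute constant in this regime, the factor $\sqrt2$ can be absorbed into $e^{C\lambda^2}$ for a suitable absolute $C$. Combining the two regimes gives (iv).

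For the final "moreover" claim about optimality, I would invoke Lemma~\ref{lem:subgau_1}. Suppose any one of the properties (i), (ii), (iii), or (iv) holds with some parameter $K$. By that lemma, (iii) then holds with parameter $K_3\le C K$. By the definition of the $\psi_2$ norm this means $\|X\|_{\psi_2}\le K_3\le CK$. Hence no parameter smaller than $\|X\|_{\psi_2}/C$ can make any of the statements valid, which is the claimed optimality up to absolute constants.
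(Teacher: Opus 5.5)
Your proof is correct. The route differs from the paper only in that the paper gives no proof here: it imports the statement from Vershynin, where it is essentially a restatement of the equivalence in Lemma~\ref{lem:subgau_1} with $K_3=\|X\|_{\psi_2}$.

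You use Lemma~\ref{lem:subgau_1} only for the converse ``moreover'' clause. The forward bounds you derive directly from (iii):
\begin{itemize}
\item Markov's inequality on $e^{X^2}$ gives the tails (i).
\item Tail integration plus the Gamma-function bound gives the moments (ii).
\item A two-regime argument gives the MGF bound (iv): the Taylor series with the moment bound for small $|\lambda|$, and $\lambda X\le \lambda^2/2+X^2/2$ with Jensen for large $|\lambda|$.
\end{itemize}

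What this buys:
\begin{itemize}
\item Explicit constants, e.g.\ $c=1$ in (i).
\item Explicit treatment of the one point the equivalence lemma does not deliver by itself. Lemma~\ref{lem:subgau_1} only yields $\EB\exp(X^2/K_3^2)\le 2$ for every $K_3>\|X\|_{\psi_2}$. Item (iii) asserts this at $K_3=\|X\|_{\psi_2}$ with the exact constant $2$, which needs the infimum in the definition to be attained. Your monotone-convergence argument supplies exactly that.
\end{itemize}
The citation route is, of course, shorter.

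The only step you leave implicit is exchanging expectation and the exponential series in the small-$\lambda$ regime. This is justified because $\EB e^{|\lambda X|}\le e^{\lambda^2/2}\,\EB e^{X^2/2}<\infty$.
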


\begin{lemma}[Proposition 2.8.1 of Vershynin~\citeyearpar{Vershynin2026}]
\label{lem:sub_exp}
Let $X$ be a random \emph{subexponential} variable. The following properties are equivalent, with the parameters $K_i > 0$ differing by at most an absolute constant factor.

\begin{enumerate}
	\item[(i)] \textit{(Tails)} There exists $K_1 > 0$ such that
	\[
	\mathbb{P} \{ |X| \ge t \} \le 2 \exp(-t/K_1) \quad \text{for all } t \ge 0.
	\]
	
	\item[(ii)] \textit{(Moments)} There exists $K_2 > 0$ such that
	\[
	\|X\|_{L^p} = (\mathbb{E}|X|^p)^{1/p} \le K_2 p \quad \text{for all } p \ge 1.
	\]
	
	\item[(iii)] \textit{(MGF of $|X|$)} There exists $K_3 > 0$ such that
	\[
	\mathbb{E} \exp(|X| / K_3) \le 2.
	\]
\end{enumerate}

Moreover, if $\mathbb{E}X = 0$ then properties $(i)$--$(iii)$ are equivalent to the following one:

\begin{enumerate}
	\item[(iv)] \textit{(MGF)} There exists $K_4 > 0$ such that
	\[
	\mathbb{E} \exp(\lambda X) \le \exp(K_4^2 \lambda^2) \quad \text{for all } \lambda \text{ such that } |\lambda| \le \frac{1}{K_4}.
	\]
\end{enumerate}
\end{lemma}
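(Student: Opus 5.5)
The plan is to prove the standard cycle of implications (i)$\Rightarrow$(ii)$\Rightarrow$(iii)$\Rightarrow$(i), and then, under $\EB X=0$, the two implications (ii)$\Rightarrow$(iv) and (iv)$\Rightarrow$(i). Throughout, the tool is elementary: integrate tails, expand exponentials in Taylor series, and apply Markov's inequality. At each step we only track that the new parameter is an absolute constant multiple of the old one.

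\emph{(i)$\Rightarrow$(ii).} By the layer-cake formula,
\[
\EB|X|^p=\int_0^\infty p\,t^{p-1}\Pr\{|X|\ge t\}\,dt\le 2p\int_0^\infty t^{p-1}e^{-t/K_1}\,dt=2p\,\Gamma(p)\,K_1^p\le 2p\,p^p K_1^p .
\]
Taking $p$-th roots and using $(2p)^{1/p}\le 2$ for $p\ge1$ gives (ii) with $K_2=2K_1$.

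\emph{(ii)$\Rightarrow$(iii).} Expand the exponential and use $p!\ge (p/e)^p$:
\[
\EB e^{|X|/K_3}=1+\sum_{p\ge1}\frac{\EB|X|^p}{K_3^p\,p!}\le 1+\sum_{p\ge1}\Bigl(\frac{eK_2}{K_3}\Bigr)^p .
\]
Choosing $K_3=2eK_2$ makes the geometric sum equal to $1$, which gives the bound $2$.

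\emph{(iii)$\Rightarrow$(i).} This is Markov's inequality applied to $e^{|X|/K_3}$:
\[
\Pr\{|X|\ge t\}\le e^{-t/K_3}\,\EB e^{|X|/K_3}\le 2e^{-t/K_3}.
\]

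\emph{Mean-zero case, (ii)$\Rightarrow$(iv).} Expand $\EB e^{\lambda X}$. The linear term vanishes because $\EB X=0$, and $|\EB X^p|\le (K_2p)^p$. Hence
\[
\EB e^{\lambda X}\le 1+\sum_{p\ge2}(e|\lambda|K_2)^p .
\]
For $|\lambda|\le 1/(2eK_2)$ the sum is at most $2(e\lambda K_2)^2$. Since $1+u\le e^u$, this yields (iv) with $K_4$ a constant multiple of $K_2$.

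\emph{Mean-zero case, (iv)$\Rightarrow$(i).} Take $\lambda=\pm1/K_4$ in Markov's inequality. This gives $\Pr\{\pm X\ge t\}\le e\cdot e^{-t/K_4}$, hence $\Pr\{|X|\ge t\}\le 2e\,e^{-t/K_4}$. To remove the factor $e$, pass to $K_1=2K_4$:
\begin{itemize}
\item For $t\ge 2K_4$, we have $e\cdot e^{-t/K_4}\le e^{-t/(2K_4)}$, so $2e\,e^{-t/K_4}\le 2e^{-t/K_1}$.
\item For $t<2K_4$, the right-hand side $2e^{-t/K_1}\ge 2e^{-1}$, which exceeds... not $1$ in general, so instead use $K_1=4K_4$: then $2e^{-t/K_1}\ge 2e^{-1/2}>1$, and the bound holds trivially. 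The large-$t$ case still works since $e\cdot e^{-t/K_4}\le e^{-t/K_1}$ once $t\ge \tfrac{4}{3}K_4$.
\end{itemize}

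I expect no conceptual obstacle. The only delicate point is this last constant bookkeeping, where the stray factor $e$ must be absorbed by enlarging $K_1$ and splitting into small and large $t$. All implications together show that the parameters $K_i$ agree up to absolute constants, as claimed.
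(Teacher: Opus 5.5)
Your argument is correct and is the standard proof (layer-cake integration, Taylor expansion with $p!\ge(p/e)^p$, and Markov/Chernoff) behind Vershynin's Proposition 2.8.1, which the paper cites without reproducing. The only slips are cosmetic. First, $(2p)^{1/p}\le 2$ actually fails for $p\in(1,2)$: its maximum is $e^{2/e}\approx 2.09$ at $p=e/2$. You can avoid this by writing $2p\,\Gamma(p)=2\Gamma(p+1)\le 2p^p$, which gives $K_2=2K_1$ directly. Second, in (ii)$\Rightarrow$(iv) you should take $K_4=2eK_2$, so that the required range $|\lambda|\le 1/K_4$ lies inside the range where you verified the MGF bound.
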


\begin{lemma}[Lemma 2.8.5 of Vershynin~\citeyearpar{Vershynin2026}]
	\label{lem:sss}
	The random variable $X$ is subgaussian if and
	only if $X^2$ is subexponential, and
	\begin{align*}
		\norm{X^2}_{\psi_1} = \norm{X}^2_{\psi_2}.
	\end{align*}
\end{lemma}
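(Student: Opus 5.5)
The plan is to argue directly from the Orlicz-norm definitions, since both sides of the identity are infima over sets of admissible scale parameters, and those two sets turn out to correspond exactly under the map $s\mapsto s^2$. For $\tau>0$, write $\tau=s^2$ with $s=\sqrt\tau>0$. Then
\[
\EB\exp\!\left(\frac{|X^2|}{\tau}\right)=\EB\exp\!\left(\frac{X^2}{s^2}\right),
\]
so $\tau$ is admissible in the definition \eqref{eq:psi_1} of $\norm{X^2}_{\psi_1}$ if and only if $s$ is admissible in the definition of $\norm{X}_{\psi_2}$. Consequently
\[
\{\tau>0:\EB e^{|X^2|/\tau}\le 2\}=\{s^2: s>0,\ \EB e^{X^2/s^2}\le 2\}.
\]

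Next I will take infima. The map $s\mapsto s^2$ is a continuous increasing bijection of $(0,\infty)$, so $\inf\{s^2: s\in S\}=(\inf S)^2$ for any nonempty $S\subseteq(0,\infty)$. If the infimum of $S$ is $0$, both sides are $0$; this covers, for example, $X=0$ almost surely. This gives $\norm{X^2}_{\psi_1}=\norm{X}_{\psi_2}^2$ whenever the admissible sets are nonempty. When they are empty, both norms are $+\infty$ under the usual convention $\inf\emptyset=+\infty$, and the identity still holds. Monotonicity of $\tau\mapsto\EB e^{X^2/\tau^2}$ shows that each admissible set is an up-closed ray, but this fact is not actually needed for the infimum computation.

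For the equivalence, I will use property (iii) of Lemma~\ref{lem:subgau_1}: $X$ is subgaussian if and only if some $K_3<\infty$ satisfies $\EB\exp(X^2/K_3^2)\le 2$, that is, if and only if $\norm{X}_{\psi_2}<\infty$. Similarly, property (iii) of Lemma~\ref{lem:sub_exp} applied to $X^2$ shows that $X^2$ is subexponential if and only if $\norm{X^2}_{\psi_1}<\infty$. The established norm identity then makes these two finiteness conditions equivalent.

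There is no real obstacle here. The only points needing care are the bookkeeping of the infimum under squaring and the degenerate and empty cases, both handled above.
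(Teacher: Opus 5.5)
Your proposal is correct. The paper gives no proof of its own here; it simply cites Vershynin, whose one-line argument is the one you give: the two Orlicz-norm definitions coincide under the substitution $\tau=s^2$, and the equivalence follows because property (iii) characterizes each class by finiteness of the corresponding norm. Your treatment of the infimum under squaring, of the case $\inf S=0$, and of the empty admissible set supplies the routine details correctly.
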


\section{Proofs Related to \texorpdfstring{$\norm{g_t}^2$}{the squared norm of g\_t}}

\begin{lemma}\label{lem:spherical_orlicz}
	Letting $\bu \sim \SB^{d-1}$ and function $h(\cdot)$ be $L$-Lipschitz with respect to Euclidean distance on $\SB^{d-1}$, then random variable $X:= h(\bu) - \mathbb{E}[h(\bu)]$ is a subgaussian random variable which satisfies that
	\begin{equation*}
		\norm{X}_{\psi_2} \leq C_1 \frac{L}{\sqrt{d}}.
	\end{equation*}
	Furthermore, the random variable $Y := X^2$ is   a sub-exponential random variable with 
	\begin{equation}\label{eq:Y}
		\norm{Y}_{\psi_1} = \norm{X}^2_{\psi_2} \leq \frac{C_1^2 L^2}{d}.
	\end{equation}
	For the centered random variable $V = Y - \mathbb{E}[Y]$, it holds that
	\begin{equation}\label{eq:V}
		\norm{V}_{\psi_1} \leq C_2\|Y\|_{\psi_1}.
	\end{equation}
	Here $C_1, C_2 >0$ are absolute constants. 
\end{lemma}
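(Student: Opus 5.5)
The plan is to prove the three claims of Lemma~\ref{lem:spherical_orlicz} in order: the subgaussian bound for $X$, then the subexponential bound for $Y=X^2$, then the centering bound for $V$. Each step is a direct chain through the equivalences recorded in the appendix.

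For the first claim, Lemma~\ref{lem:s} (Ledoux) applied to $h$ gives the tail bound
\[
\Pr(|X|\ge\tau)\le 2\exp\!\left(-\frac{c_1 d\tau^2}{L^2}\right) \quad \text{for all } \tau\ge 0 .
\]
This is exactly property (i) of Lemma~\ref{lem:subgau_1}, with $K_1=L/\sqrt{c_1 d}$. Since the properties in that lemma are equivalent up to absolute constants, and Lemma~\ref{lem:subgau_2} says $\|X\|_{\psi_2}$ is the optimal parameter up to absolute factors, we get $\|X\|_{\psi_2}\le C\,K_1 = C_1 L/\sqrt d$.

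If a self-contained constant is preferred, I would instead compute $\EB\exp(X^2/\tau^2)$ directly from the tail bound via the layer-cake formula:
\[
\EB e^{X^2/\tau^2} = 1+\int_0^\infty \Pr\!\left(|X|\ge \tau\sqrt{\log(1+s)}\right)ds .
\]
Choosing $\tau^2 = 3L^2/(c_1 d)$ gives the integrand $2(1+s)^{-3}$, so the integral equals $1$ and the expectation is at most $2$. Hence $C_1=\sqrt{3/c_1}$ works.

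For the second claim, Lemma~\ref{lem:sss} gives the identity $\|X^2\|_{\psi_1}=\|X\|_{\psi_2}^2$. Combining this with the first claim yields \eqref{eq:Y} immediately.

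For the third claim, the centering bound, I would use the triangle inequality for the Orlicz norm $\|\cdot\|_{\psi_1}$. This gives $\|V\|_{\psi_1}\le \|Y\|_{\psi_1}+\|\EB Y\|_{\psi_1}$. A constant $a$ has $\|a\|_{\psi_1}=|a|/\log 2$. Jensen's inequality applied to the convex map $x\mapsto e^{|x|/\tau}$ gives $\exp(|\EB Y|/\tau)\le \EB e^{|Y|/\tau}\le 2$ at $\tau=\|Y\|_{\psi_1}$, so $|\EB Y|\le \log 2\,\|Y\|_{\psi_1}$. Together these give $\|\EB Y\|_{\psi_1}\le \|Y\|_{\psi_1}$, and hence $C_2=2$.

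The only mildly delicate point is the first step: turning the concentration tail into the exact normalization defining $\|\cdot\|_{\psi_2}$ while keeping the constant absolute and independent of $d$. The layer-cake computation above handles this. Everything else is a direct citation of the lemmas stated earlier.
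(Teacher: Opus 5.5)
Your proof is correct and follows the same chain as the paper: Ledoux's tail bound is converted into $\|X\|_{\psi_2}\le C_1L/\sqrt d$ through the subgaussian equivalences, the identity $\|X^2\|_{\psi_1}=\|X\|_{\psi_2}^2$ gives the bound on $Y$, and the centering step gives the bound on $V$. The only difference is that you make the constants explicit: $C_1=\sqrt{3/c_1}$ from the layer-cake computation, and $C_2=2$ from the triangle inequality together with Jensen's inequality. The paper instead simply asserts the centering inequality $\|Y-\EB Y\|_{\psi_1}\le C_2\|Y\|_{\psi_1}$ without justification.
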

\begin{proof}
	First, Lemma~\ref{lem:s} and Eq.~\eqref{eq:subgau} imply that $X$ is a subgaussian random variable.
	Furthermore, Eq.~\eqref{eq:s}, item (i), and item (iii) in Lemma~\ref{lem:subgau_2} imply that
	\begin{equation*}
		\norm{X}_{\psi_2} \leq C_1 \frac{L}{\sqrt{d}}.
	\end{equation*}
	
	By Lemma~\ref{lem:sss}, we can obtain the result about random variable $Y$.
	
	Finally, we have
	\begin{align*}
		\norm{V}_{\psi_1} = \norm{ Y - \EB\left[Y\right] }_{\psi_1} \leq C_2\|Y\|_{\psi_1}.
	\end{align*}
\end{proof}

\begin{lemma}
The approximate gradient $\bg_t$ defined in Eq.~\eqref{eq:g} has its norm bounded as
\begin{equation}\label{eq:g_norm}
\begin{aligned}
\| \bg_t \|^2
\leq\;&
\frac{3d^2}{4\alpha^2}
\Big(
\left( \ell_t(\bx_t + \alpha \bu_t)
- \EB [\ell_t(\bx_t + \alpha \bu_t)\mid\cF_{t-1}] \right)^2
\\
&\qquad\qquad+
\left( \EB [\ell_t(\bx_t - \alpha \bu_t)\mid\cF_{t-1}]
- \ell_t(\bx_t - \alpha \bu_t) \right)^2
\Big).
\end{aligned}
\end{equation}
\end{lemma}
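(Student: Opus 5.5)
The plan is a direct computation that exploits the symmetry of the uniform distribution on the sphere. Since $\|\bu_t\|=1$, the definition Eq.~\eqref{eq:g} gives
\[
\|\bg_t\|^2=\frac{d^2}{4\alpha^2}\big(\ell_t(\bx_t+\alpha\bu_t)-\ell_t(\bx_t-\alpha\bu_t)\big)^2 .
\]
The task is therefore to bound the squared difference of the two queried values by the squared centered fluctuations $X_t^\pm$, using the notation $m_t^\pm$ and $X_t^\pm$ from the proof of Eq.~\eqref{eq:G_sum}.

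The key step is to show that the two conditional means coincide, $m_t^+=m_t^-$. Conditionally on $\cF_{t-1}$, the point $\bx_t$ and the loss $\ell_t$ are frozen by Assumption~\ref{ass:nonanticipating}, and $\bu_t$ is uniform on $\SB^{d-1}$, independent of $\cF_{t-1}$. Because the uniform measure on the sphere is invariant under $\bu\mapsto-\bu$, the random vectors $\bu_t$ and $-\bu_t$ have the same conditional law. Hence
\[
\EB[\ell_t(\bx_t+\alpha\bu_t)\mid\cF_{t-1}]=\EB[\ell_t(\bx_t-\alpha\bu_t)\mid\cF_{t-1}].
\]
Both expectations are well defined and finite because the query points lie in $\cK$ (Lemma~\ref{lem:query_feasible}) and $\ell_t$ is continuous on the compact set $\cK$. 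This symmetry is the only genuinely substantive point of the argument; I expect it to be the main thing to justify carefully, in particular the measurability and freezing of the history.

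Given $m_t^+=m_t^-$, we can write
\[
\ell_t(\bx_t+\alpha\bu_t)-\ell_t(\bx_t-\alpha\bu_t)=X_t^+ + \big(m_t^- - \ell_t(\bx_t-\alpha\bu_t)\big)=X_t^+-X_t^-.
\]
Applying the elementary inequality $(a+b)^2\le 2a^2+2b^2$ then gives
\[
\|\bg_t\|^2\le \frac{d^2}{2\alpha^2}\big((X_t^+)^2+(X_t^-)^2\big)\le\frac{3d^2}{4\alpha^2}\big((X_t^+)^2+(X_t^-)^2\big),
\]
which is Eq.~\eqref{eq:g_norm}. Note that the second squared term in the statement, $(m_t^- - \ell_t(\bx_t-\alpha\bu_t))^2$, equals $(X_t^-)^2$. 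The constant $3/4$ in the statement is looser than the $1/2$ this argument produces, so there is slack to spare.
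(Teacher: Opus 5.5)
Your proof is correct and follows essentially the same route as the paper; both rest on the symmetry $m_t^+=m_t^-$ under $\bu_t\mapsto-\bu_t$ given $\cF_{t-1}$. The only difference is that the paper splits into three terms, keeping the middle term $m_t^+-m_t^-$ (which vanishes), and uses $(a+b+c)^2\le 3(a^2+b^2+c^2)$, which is where its constant $3/4$ comes from, whereas your two-term split gives the sharper constant $1/2$.
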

\begin{proof}
Let
\[
m_t^+:=\EB [\ell_t(\bx_t + \alpha \bu_t)\mid\cF_{t-1}],
\qquad
m_t^-:=\EB [\ell_t(\bx_t - \alpha \bu_t)\mid\cF_{t-1}].
\]
Under Assumption~\ref{ass:nonanticipating}, $\ell_t$ and $\bx_t$ are fixed
conditionally on $\cF_{t-1}$, and $\bu_t$ is uniform on the sphere. Hence the
change of variables $\bu_t\mapsto-\bu_t$ gives $m_t^+=m_t^-$. Using the
definition of $\bg_t$, we have 
\begin{align*}
	\| \bg_t \|^2 
	&\stackrel{\eqref{eq:g}}{=} \frac{d^2}{4\alpha^2} \| \bu_t \|^2 \left( \ell_t(\bx_t + \alpha \bu_t) - \ell_t(\bx_t - \alpha \bu_t) \right)^2 \\
	&\le \frac{3d^2}{4\alpha^2} \left[ \left( \ell_t(\bx_t + \alpha \bu_t) - m_t^+ \right)^2 \right. \\
	&\quad + \left( m_t^+ -  m_t^-  \right)^2 + \left. \left( m_t^- - \ell_t(\bx_t - \alpha \bu_t) \right)^2 \right] \\
	&= \frac{3d^2}{4\alpha^2} \left( \left( \ell_t(\bx_t + \alpha \bu_t) - m_t^+ \right)^2 + \left( m_t^- - \ell_t(\bx_t - \alpha \bu_t) \right)^2\right),
\end{align*}
which is the claimed bound.
\end{proof}

\begin{lemma}[Weighted conditional sub-exponential summation]
\label{lem:weighted_subexp}
Let $(W_t,\cF_t)_{t=1}^T$ be a martingale difference sequence. Suppose there
exist constants $K>0$ and absolute constants $c_0,C_0>0$ such that, almost
surely, for every $t$ and every $|\lambda|\le c_0/K$,
\[
\EB[\exp(\lambda W_t)\mid\cF_{t-1}]
\le
\exp(C_0\lambda^2K^2).
\]
Then, for every $0<\delta<1$, with probability at least $1-\delta$,
\[
\sum_{t=1}^T \frac{W_t}{t}
\le
C K\log\frac{e}{\delta},
\]
where $C>0$ is an absolute constant.
\end{lemma}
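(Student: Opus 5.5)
The plan is the standard exponential-supermartingale argument, using the weights $1/t\le 1$ to keep the per-step multiplier inside the admissible range of $\lambda$. Fix a parameter $\lambda\in(0,c_0/K]$, to be chosen at the end, and set $S_t:=\sum_{s=1}^t W_s/s$. Define
\[
M_t:=\exp\Bigl(\lambda S_t-C_0\lambda^2K^2\sum_{s=1}^t \frac{1}{s^2}\Bigr),\qquad M_0:=1.
\]
The first step is to show that $M_t$ is a nonnegative supermartingale. Since $1/s$ is deterministic, $\lambda/s$ is $\cF_{s-1}$-measurable. Because $|\lambda/s|\le\lambda\le c_0/K$, the hypothesis applies with $\lambda/s$ in place of $\lambda$ and gives
\[
\EB\bigl[\exp(\lambda W_s/s)\mid\cF_{s-1}\bigr]\le\exp\bigl(C_0\lambda^2K^2/s^2\bigr).
\]
This is exactly the one-step supermartingale inequality. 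Taking the tower property over $t=1,\dots,T$ then gives $\EB M_T\le 1$.

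The second step is to remove the variance term. Since $\sum_{s\ge1}1/s^2=\pi^2/6<2$, on the complement of the event in Markov's inequality we have, with probability at least $1-\delta$,
\[
\lambda S_T\le 2C_0\lambda^2K^2+\log(1/\delta).
\]
Unlike the unweighted case, the variance proxy is summable, so it does not grow with $T$.

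The final step is to choose $\lambda=c_0/K$, the largest admissible value, which yields
\[
S_T\le 2C_0c_0K+\frac{K}{c_0}\log\frac1\delta\le C K\log\frac e\delta,
\]
with $C:=\max(2C_0c_0,1/c_0)$, using $\log(e/\delta)=1+\log(1/\delta)$. There is no real obstacle here. The only point needing care is that the hypothesis is stated for a fixed scalar $\lambda$, while we apply it to the scaled value $\lambda/s$. This is legitimate because the condition holds almost surely for \emph{every} $|\lambda|\le c_0/K$, and $\lambda/s$ is deterministic.

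Since the statement only asks for a one-sided bound, no union bound over signs is needed. For the application in Lemma~\ref{lem:sum_X}, one would take $W_t=(X_t^\pm)^2-\EB[(X_t^\pm)^2\mid\cF_{t-1}]$ with $K\asymp (G\alpha)^2/d$. The needed MGF condition comes from Lemma~\ref{lem:spherical_orlicz} combined with item (iv) of Lemma~\ref{lem:sub_exp}. The conditional means are then summed deterministically: $\sum_t K/t\lesssim K\log(eT)$.
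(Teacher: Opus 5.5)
Your proof is correct and follows the paper's argument. The paper also applies the conditional MGF hypothesis at $\lambda/t$, which is admissible since $|\lambda|/t\le c_0/K$, iterates via the tower property to get $\EB\exp(\lambda S_T)\le\exp(C_1\lambda^2K^2)$ using $\sum_t t^{-2}<\infty$, and concludes with a Chernoff bound. The one difference is that the paper optimizes over $\lambda\in[0,c_0/K]$ to get a Bernstein-type tail $\exp(-c\min\{s^2/K^2,s/K\})$ and then plugs in $s=CK\log(e/\delta)$, whereas you fix $\lambda=c_0/K$ directly; this suffices because the target is linear in $\log(1/\delta)$, and it gives you the explicit constant $C=\max(2C_0c_0,1/c_0)$.
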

\begin{proof}
Let $S_T:=\sum_{t=1}^T W_t/t$. Iterating the conditional moment-generating
function bound gives, for all $|\lambda|\le c_0/K$,
\[
\EB\exp(\lambda S_T)
\le
\exp\left(C_0\lambda^2K^2\sum_{t=1}^T\frac{1}{t^2}\right)
\le
\exp(C_1\lambda^2K^2)
\]
for an absolute constant $C_1>0$. In the conditional step at time $t$, the
assumed MGF bound is applied to $\lambda W_t/t$, which is valid because
$t\ge 1$ implies $|\lambda|/t\le |\lambda|\le c_0/K$. Chernoff's bound
therefore gives
\[
\Pr(S_T\ge s)
\le
\inf_{0\le\lambda\le c_0/K}
\exp\left(-\lambda s+C_1\lambda^2K^2\right)
\le
\exp\left(-c\min\left\{\frac{s^2}{K^2},\frac{s}{K}\right\}\right)
\]
with an absolute constant $c>0$. Taking
$s=C K\log(e/\delta)$ with $C$ sufficiently large proves the claim.
\end{proof}

\begin{lemma}
\label{lem:sum_X}
{
Assume $d\ge2$. Suppose the losses are $G$-Lipschitz on $\cK$, Assumption~\ref{ass:nonanticipating} holds, and $\bx_t\in(1-\xi)\cK$ with $\alpha\le \xi r$ for every $t$.
For either fixed sign $\sigma\in\{-1,1\}$, define
\[
X_t =
\ell_t(\bx_t+\sigma\alpha\bu_t)
-
\EB\!\left[\ell_t(\bx_t+\sigma\alpha\bu_t)\mid\cF_{t-1}\right].
\]
Given $0<\delta<1$, with probability at least $1-\delta$,}
\begin{equation*}
\sum_{k=1}^{T} \frac{X_k^2}{k} 
\leq
\Cabs \cdot \frac{(G\alpha)^2}{d} \cdot \left( \log(eT) + \log \frac{e}{\delta} \right),
\end{equation*} 
where $\Cabs$ is an absolute constant.
\end{lemma}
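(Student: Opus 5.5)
Conditionally on $\cF_{t-1}$, the map $\bu\mapsto \ell_t(\bx_t+\sigma\alpha\bu)$ is $(G\alpha)$-Lipschitz on $\SB^{d-1}$. The queried points are feasible by Lemma~\ref{lem:query_feasible}, since $\bx_t\in(1-\xi)\cK$ and $\alpha\le\xi r$. So, with the history frozen, Lemma~\ref{lem:spherical_orlicz} applies with $L=G\alpha$. It gives $\|X_t\|_{\psi_2}\le C_1 G\alpha/\sqrt d$ conditionally. Hence $Y_t:=X_t^2$ satisfies $\|Y_t\mid\cF_{t-1}\|_{\psi_1}\le K:=C_1^2(G\alpha)^2/d$, and the centered variable $W_t:=Y_t-\EB[Y_t\mid\cF_{t-1}]$ has conditional $\psi_1$-norm at most $C_2K$. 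All constants are uniform over the frozen history. I split
\[
\sum_{t=1}^T\frac{X_t^2}{t}=\sum_{t=1}^T\frac{\EB[Y_t\mid\cF_{t-1}]}{t}+\sum_{t=1}^T\frac{W_t}{t}
\]
and bound the two pieces separately.

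The predictable part is deterministic in scale. By Lemma~\ref{lem:subgau_2}(ii) with $p=2$, $\EB[Y_t\mid\cF_{t-1}]\le C\|X_t\|_{\psi_2}^2\le C'K$ almost surely. Summing over $t$ gives
\[
\sum_{t=1}^T\frac{\EB[Y_t\mid\cF_{t-1}]}{t}\le C'K\log(eT).
\]
This is where the $\log(eT)$ term in the statement comes from.

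The martingale part uses Lemma~\ref{lem:weighted_subexp}. Since $W_t$ is conditionally mean zero with conditional $\psi_1$-norm at most $C_2K$, Lemma~\ref{lem:sub_exp}(iv) gives
\[
\EB[\exp(\lambda W_t)\mid\cF_{t-1}]\le\exp(C_0\lambda^2K^2)\qquad\text{for all } |\lambda|\le c_0/K,
\]
with absolute constants $c_0,C_0$. (This step converts a $\psi_1$-norm into a local MGF bound with the constant depending only on the norm; it is the standard implication in the proof of Vershynin's Proposition 2.8.1.) Lemma~\ref{lem:weighted_subexp} then yields, with probability at least $1-\delta$,
\[
\sum_{t=1}^T\frac{W_t}{t}\le CK\log(e/\delta).
\]
The weights $1/t$ are what make this work: $\sum_t 1/t^2$ is bounded, so the fluctuation contributes no $\log T$ factor. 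Adding the two pieces and absorbing $C_1^2$, $C'$, $C$ into $\Cabs$ gives the claim.

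The main obstacle is making the conditional sub-exponential MGF bound rigorous under adaptivity. Conditionally on $\cF_{t-1}$, the variable $X_t$ is a deterministic Lipschitz function of the fresh uniform $\bu_t$, so Ledoux's concentration (Lemma~\ref{lem:s}) applies pathwise with constants independent of $\ell_t$ and $\bx_t$. I would state explicitly that every Orlicz-to-MGF conversion uses only this pathwise $\psi_1$ bound. That ensures the hypothesis of Lemma~\ref{lem:weighted_subexp} holds almost surely.
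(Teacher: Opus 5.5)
Your proposal is correct and takes essentially the same route as the paper. You split $\sum_t X_t^2/t$ into the predictable part $\sum_t \EB[Y_t\mid\cF_{t-1}]/t$, which yields the $\log(eT)$ term, and the martingale part $\sum_t W_t/t$. Conditional spherical concentration gives a history-uniform $\psi_1$ bound of order $(G\alpha)^2/d$, which converts to a local MGF bound, and Lemma~\ref{lem:weighted_subexp} then controls the weighted martingale sum at scale $(G\alpha)^2 d^{-1}\log(e/\delta)$. Your only addition is that you explicitly justify $\EB[Y_t\mid\cF_{t-1}]\le C\,(G\alpha)^2/d$ via the moment characterization, a step the paper simply asserts.
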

\begin{proof}
{
Let
\[
Y_t:=X_t^2,\qquad
W_t:=Y_t-\EB[Y_t\mid\cF_{t-1}].
\]
Then $(W_t,\cF_t)$ is a martingale difference sequence. Conditionally on
$\cF_{t-1}$, the map $\bu\mapsto \ell_t(\bx_t+\sigma\alpha\bu)$ is
$(G\alpha)$-Lipschitz on the unit sphere with respect to Euclidean distance. After freezing $\cF_{t-1}$, Lemma~\ref{lem:s}
is exactly the standard sphere concentration inequality for this deterministic
Lipschitz function. Equivalently, the argument of Lemma~\ref{lem:spherical_orlicz}
applies conditionally on $\cF_{t-1}$ and implies
\[
\|X_t\|_{\psi_2\mid\cF_{t-1}}\le C\frac{G\alpha}{\sqrt d}.
\]
By the conditional convention stated after Assumption~\ref{ass:nonanticipating}, the
constants are absolute and uniform over the frozen history, because the only
quantity used in the concentration bound is the Lipschitz constant $G\alpha$.
Consequently, with
\[
K_0:=C\frac{(G\alpha)^2}{d},
\]
we have, almost surely,
\[
\EB[Y_t\mid\cF_{t-1}]\le K_0,
\qquad
\|W_t\|_{\psi_1\mid\cF_{t-1}}\le K_0.
\]
Therefore, by the conditional sub-exponential MGF bound, there exist absolute
constants $c,C>0$ such that for all $|\lambda|\le c/K_0$,
\[
\EB\!\left[\exp(\lambda W_t)\mid\cF_{t-1}\right]
\le \exp(C\lambda^2 K_0^2).
\]
Applying Lemma~\ref{lem:weighted_subexp} gives, with probability at least
$1-\delta$,
\[
\sum_{k=1}^T\frac{W_k}{k}
\le
C K_0\log\frac{e}{\delta}.
\]
On the same event,
\begin{align*}
\sum_{k=1}^T\frac{X_k^2}{k}
&=
\sum_{k=1}^T\frac{W_k}{k}
+
\sum_{k=1}^T\frac{\EB[Y_k\mid\cF_{k-1}]}{k}
\\
&\le
C K_0\log\frac{e}{\delta}
+
K_0\sum_{k=1}^T\frac{1}{k}
\\
&\le
\Cabs\frac{(G\alpha)^2}{d}
\left(\log(eT)+\log\frac{e}{\delta}\right).
\end{align*}
}
\end{proof}
\vskip 0.2in
\bibliography{ref.bib,ref_recent.bib}
\bibliographystyle{apalike2}

\end{document}